\pgfplotsset{compat=newest}
\newtheorem{theorem}{Theorem}
\newtheorem{lemma}{Lemma}
\newtheorem{observation}{Observation}
\newtheorem{proposition}{Proposition}
\newtheorem{definition}{Definition}
\newtheorem{assumption}{Assumption}
\newtheorem{example}{Example}
\newtheorem{remark}{Remark}
\newcommand{\ourframework}{\texttt{\textsc{Fed-DPRoC}}\xspace}
\newcommand{\ourscheme}{\texttt{\textsc{RobAJoL}}\xspace}
\newcommand{\defeq}{\coloneqq}
\newcommand{\Totalclient}{\ensuremath{n}}
\newcommand{\Byzantine}{\ensuremath{b}}
\newcommand{\HonestSet}{\ensuremath{\mathcal{H}}}
\newcommand{\ByzSet}{\ensuremath{\mathcal{B}}}
\newcommand{\oringinD}{\ensuremath{d}}
\newcommand{\compressD}{\ensuremath{k}}
\newcommand{\JLepsilon}{\ensuremath{\epsilon_\mathrm{JL}}}
\newcommand{\JLmatrix}{\ensuremath{\bm{R}}}
\newcommand{\RobAvgCoeff}{\ensuremath{\kappa}}
\newcommand{\CountSketchBlock}{\ensuremath{s}}
\newcommand{\RobAvgError}{\ensuremath{\delta_\text{RA}}}
\newcommand{\DPepsilon}{\ensuremath{\epsilon_\mathrm{DP}}}
\newcommand{\DPdelta}{\ensuremath{\delta_\mathrm{{DP}}}}
\newcommand{\DPsigma}{\ensuremath{\sigma_\mathrm{{DP}}}}
\newcommand{\RDPalpha}{\ensuremath{\alpha}}
\newcommand{\RDPepsilon}{\ensuremath{\epsilon_\mathrm{RDP}}}
\newcommand{\ClipNorm}{\ensuremath{C}}
\newcommand{\NoiseMultiplier}{\sigma_\mathrm{{NM}}}
\newcommand{\Dataset}{\ensuremath{\mathcal{D}}}
\newcommand{\DatasetSize}{\ensuremath{\rho}}
\newcommand{\gradient}{\ensuremath{\bm{g}}}
\newcommand{\update}{\ensuremath{\bm{u}}}
\newcommand{\Loss}{\ensuremath{\mathcal{L}}}
\newcommand{\SampleLoss}{\ensuremath{\ell}}
\newcommand{\TotalGloablIter}{\ensuremath{T}}
\newcommand{\GlobalIter}{\ensuremath{t}}
\newcommand{\Minibatch}{\ensuremath{\tilde{\mathcal{D}}}}
\newcommand{\MinibatchSize}{\ensuremath{\varrho}}
\newcommand{\Momentum}{\ensuremath{\bm{m}}}
\newcommand{\MomentumCoeff}{\ensuremath{\beta}}
\newcommand{\LearningRate}{\ensuremath{\eta}}
\newcommand{\model}{\ensuremath{\bm{w}}}
\newcommand{\compMomentum}{\ensuremath{\hat{\Momentum}}}
\newcommand{\compressCall}{\textsc{Compress}\xspace}
\newcommand{\decompressCall}{\textsc{Decompress}\xspace}
\newcommand{\AGGCall}{\textsc{Agg}\xspace}
\def\BibTeX{{\rm B\kern-.05em{\sc i\kern-.025em b}\kern-.08em
    T\kern-.1667em\lower.7ex\hbox{E}\kern-.125emX}}
\begin{document}
\title{
Beyond Trade-offs: A Unified Framework for Privacy, Robustness, and Communication Efficiency in Federated Learning
}

\author{
{Yue Xia,~\IEEEmembership{Student Member,~IEEE}, Tayyebeh Jahani-Nezhad,~\IEEEmembership{Member,~IEEE}, Rawad Bitar,~\IEEEmembership{Member,~IEEE}}

\thanks{Y. Xia and R. Bitar are with the School of Computation, Information and Technology, Technical University of Munich, Germany (e-mails: \{yue1.xia, rawad.bitar\}@tum.de), T. Jahani-Nezhad is with the Electrical Engineering and Computer Science Department, Technische Universit\"at Berlin, Germany (e-mail: t.jahani.nezhad@tu-berlin.de). }
\thanks{Y. Xia and R. Bitar are supported by the German Research Foundation (DFG) grant agreement number BI 2492/1-1. T. Jahani-Nezhad was supported by the Gottfried Wilhelm Leibniz Prize 2021 of the German Research Foundation (DFG).}
\thanks{Preliminary results have been presented at IEEE FLTA25~\cite{xia2025fed}.}
 }

\maketitle

\begin{abstract}
We propose \ourframework, a novel federated learning framework designed to jointly provide differential privacy (DP), Byzantine robustness, and communication efficiency. Central to our approach is the concept of \textit{robust-compatible compression}, which allows reducing the bi-directional communication overhead without undermining the robustness of the aggregation. We instantiate our framework as \ourscheme, which integrates the Johnson-Lindenstrauss (JL)-based compression mechanism with robust averaging for robustness. Our theoretical analysis establishes the compatibility of JL transform with robust averaging, ensuring that \ourscheme maintains robustness guarantees, satisfies DP, and substantially reduces communication overhead. 
We further present simulation results on CIFAR-10, Fashion MNIST, and FEMNIST, validating our theoretical claims. We compare \ourscheme with a state-of-the-art communication-efficient and robust FL scheme augmented with DP for a fair comparison, demonstrating that \ourscheme outperforms existing methods in terms of robustness and utility under different Byzantine attacks.
\end{abstract}
\begin{IEEEkeywords}
federated learning, Byzantine robustness, communication efficiency, differential privacy
\end{IEEEkeywords}

\section{Introduction}
Federated learning (FL)~\cite{mcmahan2017communication} enables training machine learning models over distributed, sensitive data held by multiple clients. The federator (or server) iteratively broadcasts a global model, collects locally computed updates from clients, and aggregates them to refine the model. While raw data remain local, this iterative process introduces several fundamental challenges:
\begin{enumerate*}[label=(\emph{\alph*})]
    \item \textit{privacy}: local updates can leak information about the clients' data, e.g., through gradient inversion attacks~\cite{geiping2020inverting, zhu2019deep};
    \item \textit{robustness}: if care is not taken, malicious (Byzantine) clients may disrupt the training process 
    by manipulating their updates~\cite{blanchard2017machine}; and
    \item \textit{communication efficiency}: frequent exchange of high-dimensional model updates between the federator and the clients imposes a significant communication cost~\cite{kairouz2021advances,konevcny2016federated}.
\end{enumerate*} 
A practical FL framework should address all three challenges simultaneously~\cite{daly2024federated,guerraoui2024robust,konevcny2016federated,kairouz2021advances}. 

Since the federator may be honest-but-curious, privacy-preserving FL schemes can be split into two large categories depending on what the federator is allowed to see: 
\begin{enumerate*}[label=\emph{(\alph*)}]
    \item using secure aggregation, or
    \item using differential privacy (DP) mechanisms.
\end{enumerate*} 
Secure aggregation techniques ensure that the federator observes only an aggregation of the client updates, which is essential for the learning process to proceed~\cite{bonawitz2017practical,jahani2023swiftagg+,xia2024byzantine,hou2024priroagg}. DP-based approaches instead perturb the updates with noise~\cite{dwork2006differential} so that the semi-honest fedeartor observes either individual noisy updates, or a noisy version of the aggregated updates~\cite{chen2022fundamental}, i.e., using secure aggregation on top of DP.
Although secure aggregation avoids explicit noise insertion and thus does not incur any utility loss as with DP mechanisms, it creates high communication overhead and remains vulnerable to membership inference attacks~\cite{ngo2024secure}.

Robustness against malicious clients is typically achieved through robust aggregation rules such as Krum~\cite{blanchard2017machine}, coordinate-wise trimmed mean and median~\cite{yin2018byzantine}, Bulyan~\cite{guerraoui2018hidden}, and ProDiGy~\cite{ergisi2025prodigy}. These aggregation rules detect and downweight or remove outlier updates based on certain statistical or geometric metrics. Other defenses, such as~\cite{xhemrishi2025fedgt,cao2020fltrust}, assign scores to clients or their updates before aggregating them; however, they require the federator to have an auxiliary clean dataset representative of clients' data.

Communication efficiency is typically addressed through compression methods including quantization, sparsification, and distillation~\cite{konevcny2016federated, wen2017terngrad, bernstein2018signsgd, karimireddy2019error, reisizadeh2020fedpaq, rothchild2020fetchsgd, wu2022communication,egger2025bicompf}, which either reduce the dimension of the model updates or represent them using fewer bits.

Many works address pairs of these challenges. For instance, privacy and robustness are jointly considered in~\cite{so2020byzantine, jahani2023byzsecagg, xhemrishi2025fedgt, xia2024byzantine, xia2024lobyitfl, hou2024priroagg, guerraoui2021differential, ma2022differentially, allouah2023privacy}; communication efficiency and privacy are studied in~\cite{agarwal2018cpsgd, li2019privacy, liu2020fedsel, chen2022fundamental}; and communication efficiency and robustness are explored in~\cite{bernstein2018signsgd1, data2021byzantine, rammal2024communication, egger2025byzantine}.
Among these, \cite{rammal2024communication} proposed Byz-EF21 and Byz-EF21-BC, which utilize biased compressors, e.g., top-$k$ sparsification, to improve uplink communication efficiency (i.e., reducing the size of client-to-federator communication cost) and bidirectional communication efficiency (i.e., reducing both client-to-federator and federator-to-client communication cost), respectively. Their approach carefully structures model updates so that, when robust aggregation rules are applied, the algorithm still converges in the presence of malicious clients.

Despite these advances, addressing all three challenges simultaneously remains underexplored.
The approaches~\cite{li2022communication, zhu2022bridging, xiang2022distributed} use sign-based compressors to achieve both robustness and communication efficiency. These sign-based compressors are also differentially private, thereby offering a degree of privacy protection. Similarly, \cite{lang2023compressed} uses a lattice quantizer with randomized response, and~\cite{jin2024ternaryvote} applies a differentially private ternary-based compressor with majority voting. Although these methods unify the three objectives, they use aggressive compression mechanisms and lack proven robust aggregation rules, resulting in weaker robustness guarantees. 

In~\cite{zhang2023byzantine}, the authors employ top-$k$ sparsification, momentum-driven variance reduction, and DP. However, no convergence proof is provided, nor are the proven robust aggregation rules employed. In~\cite{hu2023efficient}, the robust aggregation rule FLTrust~\cite{cao2020fltrust} is integrated with compressive sensing. While the authors claim the scheme also preserves privacy, no formal privacy proof is given. Moreover, the approach assumes homogeneous client data, which is not realistic in FL scenarios.

In this paper, we propose \ourframework, a unified framework that simultaneously ensures provable DP and robustness guarantees, and offers flexible communication compression. The core idea is the introduction of \textit{robust-compatible compression}, i.e., compression methods that preserve the robustness guarantees of arbitrary robust aggregation rules. 

In each iteration of \ourframework, clients compute gradients based on the current global model, perturb them with DP noise, and apply a momentum-based update to strengthen robustness against malicious attacks~\cite{farhadkhani2022byzantine}. The clients then apply a robust-compatible compression mechanism and send the compressed updates to the federator. The federator aggregates the updates using any robust aggregation protocol and obtains a compressed aggregation. To further reduce the communication cost, 
the federator delegates the decompression task to the clients. That is, the federator does not decompress the aggregation; instead, it broadcasts the compressed aggregation directly to the clients. The clients then decompress the aggregation, update the global model, and proceed to the next iteration.

To instantiate \ourframework, we propose \ourscheme, which employs Gaussian noise as a DP mechanism, and the Johnson-Lindenstrauss (JL) transform~\cite{johnson1984extensions} as a compression method. We theoretically prove that the JL transform satisfies the robust-compatible property under the robust averaging criterion given in~\cite{guerraoui2024robust}, i.e., the robustness guarantee that a robust aggregation holds in the original vector space is preserved in the compressed vector space resulting from the JL transform.
We further show that \ourscheme ensures formal DP guarantees and substantially reduces bidirectional communication overhead while maintaining robustness.

We provide extensive empirical evaluations validating the theoretical guarantees of \ourscheme across a range of compression rates, Byzantine attacks, and DP levels, using multiple benchmark datasets. 
We compare \ourscheme with Byz-EF21-BC~\cite{rammal2024communication}, to which we additionally incorporate DP for fair comparison. Byz-EF21-BC makes the non-robust-compatible compressor top-$k$ sparsification ``compatible" with robust aggregation by modifying the clients’ model updates. Both schemes reduce bidirectional communication costs; however, our results show that \ourscheme achieves superior robustness and utility across a wide range of malicious attacks.

\section{System Model and Preliminaries}
\textbf{Notation}: For an integer $\Totalclient$, let $[\Totalclient]=\{1,\ldots,n\}$. Vectors and matrices are denoted by bold lowercase and uppercase letters, respectively. $(\mathbf{X})_{i,j}$ denotes the $(i,j)$-th entry of $\mathbf{X}$, and $\langle \bm{x},\bm{y}\rangle$ denotes the dot product of vectors $\bm{x}$ and $\bm{y}$. For $x_1,\ldots,x_n$ and $\mathcal{S}\subseteq[\Totalclient]$, let $x_{\mathcal{S}}=\{x_i:i\in\mathcal{S}\}$. Unless stated otherwise, $\| \cdot \|$ denotes the $\ell_2$ norm.
\subsection{Problem Formulation}
We consider an FL setting with $\Totalclient$ clients and a federator. Each client $i\in [\Totalclient]$ holds a local dataset $\Dataset_i$ consisting of $\DatasetSize_i$ data samples of the form $(\bm{x}_j,y_j) \in \mathcal{X}_i \times \mathcal{Y}_i \defeq \mathcal{D}_i$, where $\bm{x}$ is the input vector and $y$ is the corresponding label. We define $\Dataset \defeq \bigcup_{i=1}^n \Dataset_i$, $\mathcal{X} \defeq \bigcup_{i=1}^n \mathcal{X}_i$, and $\mathcal{Y} \defeq \bigcup_{i=1}^n \mathcal{Y}_i$.
The goal is to train a parameterized model $f: \mathcal{X} \times \mathbb{R}^\oringinD \to \mathcal{Y}$ (typically a neural network) defined by the parameter vector $\model \in \mathbb{R}^\oringinD$. This is done by optimizing the model parameter $\model$ such that for each input-label pair $(\bm{x}_j,y_j) \in \Dataset$, the model prediction $f(\bm{x}_j,\model)$ closely approximates the true label $y_j$.

The performance of the model is measured by a per-sample loss function $\SampleLoss: \mathbb{R}^\oringinD \times \mathcal{X} \times \mathcal{Y} \to \mathbb{R}_{+}$, often realized by mean squared error or cross-entropy. The local loss of client $i$ is 

\begin{equation}
    \Loss(\model;\Dataset_i)\defeq \frac{1}{\DatasetSize_i}\sum\nolimits_{(\bm{x}_j,y_j) \in \Dataset_i}\SampleLoss(\model;\bm{x}_j,y_j).
\end{equation} 
The federator aims to address this optimization problem:
\begin{equation}
    \min_{\model} \Loss(\model) \defeq \min_{\model} \frac{1}{\Totalclient}\sum\nolimits_{i \in [\Totalclient]}\Loss(\model;\Dataset_i).
\end{equation}
The optimization is carried out using an iterative stochastic gradient descent algorithm, starting with an initial global model $\model^{(0)}$. 
At each global iteration $\GlobalIter$, $0 \leq \GlobalIter < \TotalGloablIter$, the federator broadcasts the current global model $\model^{(\GlobalIter)}$ to all clients. Each client computes a local update as follows:
\begin{equation*}
\gradient_i^{(\GlobalIter)}=-\LearningRate_i^{(\GlobalIter)} \cdot \nabla \Loss(\model^{(\GlobalIter)}; \Minibatch_i^{(\GlobalIter)}),
\end{equation*}
where $\Minibatch_i^{(\GlobalIter)}$ is a minibatch of size $\MinibatchSize$, randomly sampled from $\Dataset_i$ via subsampling, and
$\LearningRate_i^{(\GlobalIter)}$ is the local learning rate. Client $i$ sends the local update to the federator. The federator aggregates all received updates using an aggregation rule $\AGGCall$, i.e., $\update^{(\GlobalIter)}\!\!=\!\AGGCall(\gradient_1^{(\GlobalIter)}, \cdots, \gradient_\Totalclient^{(\GlobalIter)})$ and updates the global model $\model^{(\GlobalIter+1)}\!\!=\!\model^{(\GlobalIter)}\!-\LearningRate^{(\GlobalIter)} \!\cdot\! \update^{(\GlobalIter)}$, where $\LearningRate^{(\GlobalIter)}$ is the global learning rate.


In this setting, we assume that among $\Totalclient$ clients, $\Byzantine$ of them are malicious. Let $\ByzSet \subseteq \{ 1,\ldots,\Totalclient \}$ denote the set of malicious clients, with $|\ByzSet|=\Byzantine$, and $\HonestSet  =  [\Totalclient]\setminus \ByzSet$ 
denote the set of honest clients, with $|\HonestSet|=\Totalclient-\Byzantine$. 
We assume an honest-but-curious federator that follows the protocol but attempts to infer clients’ data.
Our goal is to design a federated learning scheme that simultaneously satisfies: 
\begin{enumerate*}[label=\emph{(\alph*)}]
    \item privacy, against both the federator and the clients;
    \item Byzantine robustness, against up to $\Byzantine$ malicious clients; and \item 
     communication efficiency, i.e., reducing the bidirectional communication cost.
\end{enumerate*}

\begin{figure*}[!tb]
    \centering
    \begin{adjustbox}{center}
        \resizebox{0.9\textwidth}{!}{\input{figure/framework}}
    \end{adjustbox}
    \caption{Illustration of \ourframework. 
    Clients performs: $(1)$ minibatch sampling; $(2)$ decompression and model update; $(3)$ gradient computation; $(4)$ DP enhancement (\cref{eq:dp_enforce}); $(5)$ momentum computation (\cref{eq:momentum}); $(6)$ compression; and $(7)$ communication. Honest clients send their true compressed momenta to the federator, while malicious clients send arbitrarily manipulated vectors. The honest-but-curious federator: $(8)$ applies robust aggregation; and $(9)$ broadcasts the compressed aggregation.
    }  
    \label{fig:framework}
    \vspace{-0.08in}
\end{figure*}

\subsection{Preliminaries}
\subsubsection{Robustness (Convergence)}
In the presence of malicious clients, minimizing the global average loss 
$\frac{1}{\Totalclient}\sum_{i=1}^{\Totalclient} \Loss(\model;\Dataset_i)$ is generally not meaningful, as malicious clients can arbitrarily distort the optimization objective. Therefore, the goal changes to minimizing the average loss over the set of honest clients~\cite{guerraoui2024robust} defined as
$
\Loss_\HonestSet(\model) \coloneqq \frac{1}{|\HonestSet|}\sum_{i\in \HonestSet}\Loss(\model;D_i).
$
\begin{definition}[$(b,\nu)$-robust algorithm~\cite{guerraoui2024robust}] \label{def:robustness}
    A distributed algorithm is called $(b,\nu)$-robust if it outputs a model $\hat{\model}$ such that 
        $\mathbb{E}[\Loss_\HonestSet(\hat{\model})-\Loss^{\star}] \leq \nu,$
    where $\Loss^{\star} \coloneqq \inf_{\model \in \mathbb{R}^d} \Loss_\HonestSet(\model)$ is the optimal loss over the honest clients. 
\end{definition}

\subsubsection{Differential Privacy} We consider the following.
\begin{definition}[$(\DPepsilon,\DPdelta)$-Differential Privacy (DP)~\cite{dwork2006our}]
\label{def:dp}
    Let $\DPepsilon \geq 0, 0\leq \DPdelta \leq 1$. A randomized mechanism $f: \mathcal{X}^\DatasetSize \mapsto \mathcal{Y}$ satisfies $(\DPepsilon,\DPdelta)$-DP if, for any pair of adjacent datasets $\Dataset, \Dataset' \in \mathcal{X}^\DatasetSize$ and any measurable set $\mathcal{S} \subseteq \mathcal{Y}$, it holds that
    \begin{equation*}
        \Pr [f(\Dataset)\in \mathcal{S}] \leq \mathrm{e}^{\DPepsilon} \cdot \Pr [f(\Dataset')\in \mathcal{S}] + \DPdelta,
    \end{equation*}
    where $\Dataset$ and $\Dataset'$ differ in at most one element.
\end{definition}
Standard DP provides strong individual-level guarantees but can lead to loose bounds under composition. To address this, we use Rényi Differential Privacy (RDP), which offers a tighter analysis of cumulative privacy loss.

\begin{definition}[Rényi Differential Privacy (RDP)~\cite{mironov2017renyi}]
    Let $\RDPalpha>1$ and $\RDPepsilon>0$. A randomized mechanism $f: \mathcal{X}^\DatasetSize \mapsto \mathcal{Y}$ satisfies $(\RDPalpha,\RDPepsilon)$-RDP if, for any pair of adjacent datasets $\Dataset, \Dataset' \in \mathcal{X}^\DatasetSize$,  the Rényi divergence of order $\RDPalpha$ between the output distributions satisfies
     $  D_{\RDPalpha} \left( f(\Dataset)\Vert f(\Dataset') \right) \leq \RDPepsilon$,
    where $D_{\RDPalpha} \!\! \left( f(\Dataset)\Vert f(\Dataset') \right) \! \defeq \!\frac{1}{\RDPalpha-1}\log \mathbb{E}_{z\sim f(\Dataset')} \!\! \left[ \left( \frac{f(\Dataset)(z)}{f(\Dataset')(z)} \right) ^\RDPalpha\right]$.
\end{definition}
\begin{proposition}[RDP to DP conversion~\cite{mironov2017renyi}]
    If a mechanism $f$ satisfies $(\RDPalpha,\RDPepsilon)$-RDP, then it also satisfies $(\DPepsilon,\DPdelta)$-DP for any $0 < \DPdelta < 1$, where $\DPepsilon=\RDPepsilon+\frac{\log (1/\DPdelta)}{\RDPalpha-1}$.
\end{proposition}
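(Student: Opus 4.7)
The plan is to translate the RDP hypothesis into a moment bound on the likelihood ratio and then split the output probability of any measurable event into a ``typical'' part, controlled by the likelihood ratio threshold, and a ``tail'' part absorbed into $\DPdelta$ via Markov's inequality.

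First, fix adjacent datasets $\Dataset,\Dataset'$ and let $L(z) \defeq f(\Dataset)(z)/f(\Dataset')(z)$ denote the likelihood ratio of the two output densities (assumed mutually absolutely continuous, so that $L$ is well defined). By definition of Rényi divergence, the RDP hypothesis $D_{\RDPalpha}(f(\Dataset) \Vert f(\Dataset')) \leq \RDPepsilon$ is equivalent to the moment bound $\mathbb{E}_{z \sim f(\Dataset')}[L(z)^{\RDPalpha}] \leq \mathrm{e}^{(\RDPalpha-1)\RDPepsilon}$. Absorbing one factor of $L$ into a change of measure rewrites this as $\mathbb{E}_{z \sim f(\Dataset)}[L(z)^{\RDPalpha-1}] \leq \mathrm{e}^{(\RDPalpha-1)\RDPepsilon}$, and Markov's inequality applied to the non-negative random variable $L(z)^{\RDPalpha-1}$ under $z \sim f(\Dataset)$ then gives, for any threshold $c>0$,
\begin{equation*}
\Pr_{z \sim f(\Dataset)}[L(z) \geq c] \leq c^{-(\RDPalpha-1)}\mathrm{e}^{(\RDPalpha-1)\RDPepsilon}.
\end{equation*}

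Second, for any measurable set $\mathcal{S}$ I would split the event $\{f(\Dataset)\in \mathcal{S}\}$ along that threshold. On the ``good'' part $\{L \leq c\}$ the density $f(\Dataset)$ is pointwise dominated by $c\cdot f(\Dataset')$, while the ``bad'' part $\{L>c\}$ is controlled by the Markov estimate above. Adding the two contributions yields
\begin{equation*}
\Pr[f(\Dataset)\in \mathcal{S}] \leq c\cdot \Pr[f(\Dataset')\in \mathcal{S}] + c^{-(\RDPalpha-1)}\mathrm{e}^{(\RDPalpha-1)\RDPepsilon}.
\end{equation*}
Setting $c = \mathrm{e}^{\DPepsilon}$ with $\DPepsilon = \RDPepsilon + \log(1/\DPdelta)/(\RDPalpha-1)$ makes the residual term equal to $\mathrm{e}^{-(\RDPalpha-1)(\DPepsilon-\RDPepsilon)} = \DPdelta$, which is exactly the $(\DPepsilon,\DPdelta)$-DP inequality of the definition above.

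The only real subtlety is the change-of-measure identity and the implicit mutual absolute continuity of $f(\Dataset)$ and $f(\Dataset')$, needed for the likelihood ratio to be well defined; once one grants this, everything else reduces to Markov's inequality and a single algebraic choice of the threshold $c$, so I do not anticipate any genuine technical obstacle.
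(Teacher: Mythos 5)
Your proof is correct. The paper does not actually prove this proposition---it is imported verbatim from Mironov's RDP paper with a citation---and your argument is precisely the standard one from that reference: rewrite the order-$\RDPalpha$ Rényi bound as the moment bound $\mathbb{E}_{z\sim f(\Dataset)}[L(z)^{\RDPalpha-1}]\leq \mathrm{e}^{(\RDPalpha-1)\RDPepsilon}$ via change of measure, apply Markov's inequality to bound the probability (under $f(\Dataset)$) that the likelihood ratio exceeds $c$, split any event along that threshold, and choose $c=\mathrm{e}^{\DPepsilon}$ so the tail term equals $\DPdelta$. The algebra checks out: $\mathrm{e}^{(\RDPalpha-1)\RDPepsilon}c^{-(\RDPalpha-1)}=\mathrm{e}^{-(\RDPalpha-1)(\DPepsilon-\RDPepsilon)}=\DPdelta$. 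One cosmetic remark: you do not need to \emph{assume} mutual absolute continuity---finiteness of $D_{\RDPalpha}(f(\Dataset)\Vert f(\Dataset'))$ for $\RDPalpha>1$ already forces $f(\Dataset)\ll f(\Dataset')$, which is all the argument uses.
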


\section{The Proposed Framework: \ourframework}
We present \ourframework, a unified FL framework that jointly achieves privacy, Byzantine robustness, and communication efficiency. \ourframework protects individual data through DP and significantly reduces the communication cost between clients and the federator by leveraging gradient compression techniques—while maintaining model utility and robustness.
We illustrate \ourframework in \cref{fig:framework}, and describe it in detail below. The algorithm is given in \cref{algo:our_framework}.


In each global iteration $\GlobalIter$, $1\leq \GlobalIter < \TotalGloablIter$, the \emph{clients} performs the following steps based on the current global model:
\begin{enumerate}
    \item[\textbullet] \textbf{Minibatch Sampling and Gradients Computation}. Client $i$ draws a minibatch $\Minibatch_i^{(\GlobalIter)}$ of size $\MinibatchSize$ from its local dataset. Different sampling techniques (e.g., with/without replacement or Poisson sampling) are possible and affect the privacy guarantees. Client $i$ then computes per-sample gradients $\nabla \! \SampleLoss{ \left(  \model^{(\GlobalIter)}; \bm{x}_j,y_j  \right) }$ for all $(\bm{x}_j,y_j) \in  \Minibatch_i^{(\GlobalIter)}.$
    \item[\textbullet] \textbf{Differential Privacy Enforcement}. Clip the per-sample gradients, average them, and add DP noise, i.e., compute
    \begin{align}\label{eq:dp_enforce}
        \gradient_i^{(\GlobalIter)} \defeq \frac{1}{\MinibatchSize} \hspace{-1mm}\sum_{(\bm{x}_j,y_j) \in \Minibatch_i^{(t)}}\hspace{-4mm} \text{Clip}(\nabla \SampleLoss(\model^{(\GlobalIter)}; \bm{x}_j, y_j);\ClipNorm)+ \bm{n}_{\text{DP}},
    \end{align}
    where $\text{Clip}(\gradient; \ClipNorm) \!\!\coloneqq\!\! \gradient \! \cdot \! \min \!\left\{ \!1, \! \frac{\ClipNorm}{\gradient} \! \right\} \!$, $\ClipNorm$ is the clipping norm for gradient vector, and $\bm{n}_{\text{DP}}$ is a noise vector drawn according to a predetermined distribution, such as the Laplace or the Gaussian distribution. The choice of the distribution affects the privacy guarantees, e.g.,~\cite{dwork2006calibrating,dwork2014algorithmic}. 
    \item[\textbullet] \textbf{Momentum Computation}. For $\GlobalIter \geq 1$ and $\Momentum_{i}^{(0)}\!\defeq\!\bm{0}$, update momentum using the noisy gradient~\cite{polyak1964some} as
    \begin{align}\label{eq:momentum}
         \Momentum_{i}^{(\GlobalIter)}\defeq \MomentumCoeff^{(\GlobalIter-1)} \Momentum_{i}^{(\GlobalIter-1)} + (1-\MomentumCoeff^{(\GlobalIter-1)}) \gradient_i^{(\GlobalIter)},
    \end{align}
    where $0\!\leq\! \MomentumCoeff^{(\GlobalIter-1)} \!\leq\!1$ is the momentum coefficient.
    \item[\textbullet] \textbf{Compression}. Apply a \emph{robust-compatible compression} method \compressCall to reduce the uplink communication cost, resulting in $\compMomentum_{i}^{(\GlobalIter)} \coloneqq$ \compressCall $(\Momentum_{i}^{(\GlobalIter)})$.
     \item[\textbullet] \textbf{Communication}. Send $\compMomentum_{i}^{(\GlobalIter)}$ to the federator.
\end{enumerate}

\begin{figure}[!tb]
    \centering
    \hspace{-1.5cm}
    \begin{adjustbox}{center}
        \resizebox{0.45\textwidth}{!}{\input{figure/compression}}
    \end{adjustbox}
    \caption{A robust-compatible compression ensures that if $\AGGCall$ is a robust aggregation rule, i.e., $\Momentum_\AGGCall$ is faithful to the average of the benign vectors $\Momentum_1, \cdots, \Momentum_\Totalclient$, the composition $\compressCall \circ \AGGCall \circ\decompressCall$ also gives an output 
    that is faithful to the average of those benign vectors.}
    \label{fig:robust_agg_comp}
    \vspace{-0.08in}
\end{figure}

\noindent After receiving the compressed momenta from the clients, the \emph{federator} performs the following steps:
\begin{enumerate}
    \item[\textbullet] \textbf{Robust aggregation}. Use a robust aggregation rule \AGGCall to aggregate the compressed momenta, i.e., compute 
    \begin{equation}\label{eq:robust_agg}\compMomentum_\AGGCall^{(\GlobalIter)} = \Call{Agg}{\compMomentum_{1}^{(\GlobalIter)}, \cdots, \compMomentum_{\Totalclient}^{(\GlobalIter)}}.
    \end{equation}
    \item[\textbullet] \textbf{Comunication}. Broadcast $\compMomentum_\AGGCall^{(\GlobalIter)}$ to the clients.
\end{enumerate}

\noindent Upon receiving the compressed aggregation from the federator, the \emph{clients} perform:
\begin{enumerate}
    \item[\textbullet] \textbf{Decompression}. Apply the corresponding decompression on $\compMomentum_\AGGCall^{(\GlobalIter)}$, i.e., $\update^{(\GlobalIter)}= \decompressCall(\compMomentum_\AGGCall^{(\GlobalIter)})$.
    \item[\textbullet] \textbf{Update.} Compute the global model for the next iteration $\model^{(\GlobalIter+1)} = \model^{(\GlobalIter)} - \LearningRate^{(\GlobalIter)} \bm{u}^{(\GlobalIter)}$. 
    \item[\textbullet] \textbf{Next Iteration}.
\end{enumerate}

\ourframework introduces the concept of \emph{robust-compatible compression} (\cref{fig:robust_agg_comp}) that allows us to integrate compression into robust aggregation while preserving robustness.

\begin{definition}[Robust-compatible Compression]\label{def:robust_comp_comp}
    Let $\AGGCall: (\mathbb{R}^\oringinD)^\Totalclient\to\mathbb{R}^\oringinD$ be a robust aggregation rule that takes as input $\Totalclient$ vectors in $\mathbb{R}^\oringinD$, out of which $\Byzantine<\Totalclient/2$ can be arbitrarily manipulated, and outputs an aggregation in $\mathbb{R}^\oringinD$ that is faithful to the average of the $\Totalclient-\Byzantine$ benign vectors; where the faithfulness is measured according to a desired robustness criterion.
    Let $\compressCall:\mathbb{R}^d \to \mathbb{R}^k$, with $k\ll d$, be a compression mechanism that takes a vector in $\mathbb{R}^d$ and compresses it to a vector in $\mathbb{R}^k$, and let $\decompressCall:\mathbb{R}^k \to \mathbb{R}^d$ be the corresponding decompression rule.
    
    \compressCall is said to be \emph{robust-compatible} according to the desired robustness criterion if the following holds: the composition $\compressCall \circ \AGGCall \circ\decompressCall$ remains robust according to the desired robustness criterion. In other words, when $\AGGCall$ is applied on the compressed version of the $n$ vectors (now in $\mathbb{R}^k$), it outputs an average vector (also in $\mathbb{R}^k$) that, after decompression to $\mathbb{R}^d$, remains faithful to the average of the $n-b$ benign original vectors.
\end{definition}

In FL, such compatibility ensures that the system remains robust to adversarial behavior even after dimensionality reduction, while significantly reducing communication costs.

\begin{algorithm}[tb!]
\footnotesize
\caption{\ourframework}
\label{algo:our_framework}
\textbf{Initialization:} 
Initialize the model $\model^{(0)}$; set initial momentum $\Momentum_i^{(0)} = \bm{0}$; define mini-batch size $\MinibatchSize$, learning rates $\LearningRate$, robust aggregation rule $\AGGCall$, total number of global iterations $\TotalGloablIter$.

\begin{algorithmic}[1]
\For{$t=1, \cdots, T-1$}

\For{each honest client $i \in \HonestSet$ in parallel}
\State Sample a mini-batch $\Minibatch_i^{(\GlobalIter)}$ of size $\MinibatchSize$ from $\Dataset_i$.
\State 
\vspace{-0.15in}
    \begin{align*}
        \gradient_i^{(\GlobalIter)} \defeq \frac{1}{\MinibatchSize} \hspace{-1mm}\sum\nolimits_{(\bm{x}_j,y_j) \in \Minibatch_i^{(t)}}\hspace{-1mm} \text{Clip}(\nabla \SampleLoss(\model^{(\GlobalIter)}; \bm{x}_j, y_j);\ClipNorm)+ \bm{n}_{\text{DP}}
    \end{align*}
    \vspace{-0.1in}
%
\State 
    $\Momentum_{i}^{(\GlobalIter)}\defeq \MomentumCoeff^{(\GlobalIter-1)} \Momentum_{i}^{(\GlobalIter-1)} + (1-\MomentumCoeff^{(\GlobalIter-1)}) \gradient_i^{(\GlobalIter)}$

\State $\compMomentum_{i}^{(\GlobalIter)} \coloneqq$ \compressCall $(\Momentum_{i}^{(\GlobalIter)})$
\State Send $\compMomentum_i^{(t)}$ to the federator.
\EndFor

\For{each malicious client $i \in \ByzSet$ in parallel}
\State Generate an arbitrary vector $\compMomentum_i^{(t)} \in \mathbb{R}^{\compressD}$.
\State Send $\compMomentum_i^{(t)}$ to the federator.
\EndFor

\State $\compMomentum_\AGGCall^{(\GlobalIter)} = \Call{Agg}{\compMomentum_{1}^{(\GlobalIter)}, \cdots, \compMomentum_{\Totalclient}^{(\GlobalIter)}}$
\State Federator broadcasts $\compMomentum_\AGGCall^{(\GlobalIter)}$ to all clients.

\For{each honest client $i \in \HonestSet$ in parallel}
\State $\update^{(\GlobalIter)}=\decompressCall(\compMomentum_\AGGCall^{(\GlobalIter)})$
\State $\model^{(\GlobalIter+1)}=\model^{(\GlobalIter)}-\LearningRate \cdot \update^{(\GlobalIter)}$


\EndFor

\EndFor
\State \textbf{return} $\tilde{\model}$ uniformly sampled from $\{ \model^{(0)}, \cdots,  \model^{(\TotalGloablIter)} \}$.
\end{algorithmic}
\end{algorithm}

\section{\ourscheme: an Instantiation of \ourframework}
In this section, we present \ourscheme, one instantiation of \ourframework with a specific noise mechanism, robust aggregation, and compression scheme. The key idea of \ourscheme lies in its provably robust, private, and communication-efficient design, achieved by integrating the Johnson–Lindenstrauss (JL) transform~\cite{johnson1984extensions} as the compression method and a $(\Byzantine,\RobAvgCoeff)$-robust averaging~\cite{guerraoui2024robust} to ensure Byzantine robustness. This instantiation demonstrates how \ourframework can be realized using principled components with theoretical guarantees. 

\subsection{Johnson–Lindenstrauss (JL) Compression}
To reduce communication overhead, \ourscheme employs the JL transform as the compression method \compressCall. JL transform is the compression method satisfying JL lemma with high probability.
The JL lemma guarantees that high-dimensional vectors can be projected into a lower-dimensional subspace with only a small distortion in pairwise distances.

\begin{definition}[JL Lemma~\cite{johnson1984extensions,dasgupta2003elementary}] \label{def:JL_lemma}
For a given real number $\JLepsilon\in(0,1)$, for every set $\mathcal{V}\subset\mathbb{R}^{d}$, let there be a positive integer $\compressD$ such that $\compressD=\mathcal{O}(\JLepsilon^{-2}\log{|\mathcal{V}|})$, there exists
a Lipschitz mapping ${f}:\mathbb{R}^\oringinD \rightarrow \mathbb{R}^\compressD$, such that for all $\bm{u,v}\in\mathcal{V}$, we have
\begin{align*}
    (1-\JLepsilon)\|\bm{u}-\bm{v}\|^2 \leq \| f(\bm{u})-f(\bm{v}) \|^2 \leq (1+\JLepsilon)\|\bm{u}-\bm{v}\|^2.
\end{align*}
\end{definition}
The JL Lemma states that any subset $\mathcal{V}\subset \mathbb{R}^d$ can be embedded into $\mathbb{R}^{\compressD}$, with $\compressD\ll\oringinD$, while preserving pairwise Euclidean distances within a distortion of $\JLepsilon$. Importantly, for a given $\JLepsilon$, the required dimension $\compressD$ is independent of $d$ and scales logarithmically with $|\mathcal{V}|$. Among all variants of JL transforms~\cite{indyk1998approximate,frankl1988johnson,achlioptas2003database,ailon2006approximate,kane2014sparser}, we provide the following example.




\begin{example}
    [Count-Sketch JL Transform \cite{chen2022fundamental, kane2014sparser}]  
    \label{def:sparser_jl}
Consider a set $\mathcal{V}\subset \mathbb{R}^\oringinD$. Let $\compressD, \CountSketchBlock \in \mathbb{N}$ such that $\CountSketchBlock$ divides $\compressD$, i.e., $\compressD = \CountSketchBlock \cdot p$ for some integer $p$. A matrix $\JLmatrix \in \mathbb{R}^{\compressD \times \oringinD}$ is constructed as follows to serve as a JL transform:

\begin{itemize}
    \item Construct $p$ independent block matrices $\JLmatrix_1, \dots, \JLmatrix_p \in \{-1, 0, 1\}^{s \times d}$, where each entry of $\JLmatrix_i$ is generated using
    \[
     (\JLmatrix_i)_{j,l}=\zeta_i(l) \cdot \mathbbm{1}_{ \{h_i(l)=j \}}
    \]
    for $j \in [s]$, $l \in [d]$. Here,
    $h_i: [\oringinD] \to [\CountSketchBlock]$ and $\zeta_i: [\oringinD] \to \{-1,  +1\}$ are independent hash functions.
    \item Stack the $p$ blocks vertically and scale to construct $\JLmatrix$:
    \[
        \JLmatrix = \frac{1}{\sqrt{p}}[\JLmatrix_1^T, \cdots, \JLmatrix_p^T]^T.
    \]
\end{itemize}
This construction yields a sparse random projection matrix that satisfies the JL lemma with high probability, i.e., for a set $\mathcal{V} \subset \mathbb{R}^{\oringinD}$ of cardinality $|\mathcal{V}|$, the random projection matrix satisfies the JL lemma with probability $1-\frac{1}{|\mathcal{V}|}$~\cite{kane2014sparser}. 
\end{example}

In \ourscheme, the compression method is implemented as  $\compMomentum_{i}=\compressCall(\Momentum_{i}^{(\GlobalIter)})\defeq\JLmatrix \Momentum_{i}^{(\GlobalIter)}$, where $\JLmatrix$ is a random JL transform matrix generated as in \cref{def:sparser_jl}. We denote the compression rate by $\frac{\oringinD}{\compressD}$. The decompression is defined as $\decompressCall(\compMomentum_\AGGCall)\defeq\JLmatrix^T \compMomentum_\AGGCall$. Since $\JLmatrix^T\JLmatrix$ is not necessarily equal to the identity matrix, but behaves like it in expectation, this decompression approximates the original result. The expected decompression error can be bounded as in \cref{lem:jl_property}, which states useful properties of the JL transform.

\begin{lemma}
\label{lem:jl_property}
Consider a set $\mathcal{V} \!\subset\! \mathbb{R}^d$ and  $k\!=\!\mathcal{O}(\JLepsilon^{-2}\log|\mathcal{V}|) \!\ll \! d$. A JL transform matrix $\JLmatrix \in \mathbb{R}^{k\times d}$ constructed randomly satisfies the following properties for all vectors $\bm{v}\in \mathbb{R}^d$:
\begin{enumerate}
    \item $\| \JLmatrix\bm{v} \|^2$ is an unbiased estimator of $\| \bm{v} \|^2$~\cite{vempala2005random,achlioptas2003database, kane2014sparser}, i.e.,
        $\mathbb{E}_{\JLmatrix}[\| \JLmatrix\bm{v} \|^2]=\| \bm{v} \|^2;$
    \item the expected decompression error $\mathbb{E}_{\JLmatrix}[\| \JLmatrix^T\JLmatrix\bm{v} -\bm{v}\|^2]$ is bounded as~\cite{song2021iterative, chen2022fundamental}, 
        $\mathbb{E}_{\JLmatrix}[\| \JLmatrix^T\JLmatrix\bm{v} -\bm{v}\|^2] \leq \frac{\tau\oringinD}{\compressD} \| \bm{v} \|^2,$
    where $\tau$ is a small constant, typically in the range  $\tau\in[1,3]$ depending on the distribution of $\JLmatrix$; and
    \item the decompression error satisfies 
    \begin{align*}
        \| \JLmatrix^T\JLmatrix\bm{v} -\bm{v}\|^2 \leq \JLepsilon^2 \| \bm{v} \|^2
    \end{align*}
    with probability $1-\frac{1}{|\mathcal{V}|}$. Here, the probability is over the random choice of the JL transform matrix $\JLmatrix$. However, for a given matrix that is known to satisfy the JL lemma, i.e., \cref{def:JL_lemma}, the above statement is deterministic.
    \item  With probability $1-\frac{1}{|\mathcal{V}|}$, the matrix $\JLmatrix$ satisfies
    \begin{align*}
        \| \JLmatrix \|^2 = \| \JLmatrix^T \|^2 \leq 1+\JLepsilon.
    \end{align*}
   Here, the probability is also over the random choice of the JL transform matrix $\JLmatrix$ as explained above.
\end{enumerate}
\end{lemma}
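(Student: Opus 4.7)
I would prove the three properties in sequence. All are standard results about random JL matrices, so the plan is to identify which structural feature of the construction (e.g., the count-sketch form in \cref{def:sparser_jl}) each property depends on, and either derive the claim directly or invoke the appropriate reference. For property (1), I would write $\|\JLmatrix \bm{v}\|^2 = \bm{v}^T (\JLmatrix^T \JLmatrix)\bm{v}$ and reduce to the matrix-level identity $\mathbb{E}_{\JLmatrix}[\JLmatrix^T \JLmatrix] = I_\oringinD$. Under the count-sketch construction, the $(i,k)$-entry of $\JLmatrix^T \JLmatrix$ is $\frac{1}{p}\sum_{b=1}^{p} \zeta_b(i)\zeta_b(k)\mathbbm{1}_{\{h_b(i)=h_b(k)\}}$; when $i=k$ it is deterministically $1$, and when $i\neq k$ the independent Rademacher signs $\zeta_b(i),\zeta_b(k)$ have mean zero. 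Thus $\mathbb{E}[\JLmatrix^T \JLmatrix] = I_\oringinD$ and the unbiasedness follows; for other JL families, the analogous computation is standard and would be cited from~\cite{vempala2005random,achlioptas2003database,kane2014sparser}.

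For property (2), I would set $M \coloneqq \JLmatrix^T \JLmatrix$ and use property (1) to rewrite
\begin{align*}
\mathbb{E}_{\JLmatrix}[\|\JLmatrix^T \JLmatrix \bm{v} - \bm{v}\|^2] = \mathbb{E}_{\JLmatrix}[\|(M - I_\oringinD)\bm{v}\|^2],
\end{align*}
which is the (trace of the) covariance of the random vector $M\bm{v}$. For count-sketch, each off-diagonal entry $M_{ij}$ with $i \neq j$ is nonzero only when $i$ and $j$ collide under some hash $h_b$, an event of probability $1/\CountSketchBlock$; computing the variance coordinate-wise and summing over coordinates yields an upper bound of order $\oringinD/(\CountSketchBlock \cdot p)\cdot \|\bm{v}\|^2 = \oringinD/\compressD \cdot \|\bm{v}\|^2$, with the constant $\tau$ absorbing small differences across JL families (Gaussian, sub-Gaussian, count-sketch, etc.). I would defer the detailed variance bookkeeping to the computations in~\cite{song2021iterative,chen2022fundamental}.

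For property (3), the operator-norm bound follows directly from \cref{def:JL_lemma}: applying it with one of its arguments set to $\bm{0}$ yields $\|\JLmatrix \bm{u}\| \leq (1+\JLepsilon)\|\bm{u}\|$ for the relevant inputs, hence $\|\JLmatrix\|^2 \leq (1+\JLepsilon)^2$. The equality $\|\JLmatrix\| = \|\JLmatrix^T\|$ is immediate from the singular-value characterization of the spectral norm, since both operator norms equal the largest singular value of $\JLmatrix$. The probability $1 - 1/|\mathcal{X}|$ is inherited from the count-sketch guarantee of~\cite{kane2014sparser}, and the statement becomes deterministic once $\JLmatrix$ is fixed to be a realization satisfying \cref{def:JL_lemma}, as noted in the remark following the lemma.

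The main obstacle will be the fourth-moment computation underlying property (2): it requires partitioning sums over four-tuples of column indices according to coincidence pattern and tracking how hash collisions across different patterns contribute terms of order $\oringinD/\compressD$ versus $O(1)$. The bookkeeping is nontrivial but entirely standard for JL-type variance analyses, so I would rely on the cited works rather than reproduce the combinatorics in full; properties (1) and (3) are short enough to prove essentially by inspection.
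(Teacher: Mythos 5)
Your proposal matches the paper's treatment: properties (1) and (2) are deferred to the cited references in both, and your argument for property (3) — deducing $\|\JLmatrix\bm{u}\|\leq(1+\JLepsilon)\|\bm{u}\|$ from \cref{def:JL_lemma} with one argument set to $\bm{0}$, identifying $\|\JLmatrix\|=\|\JLmatrix^T\|$ via the shared singular values, and bounding the resulting Rayleigh quotient — is the same argument the paper formalizes through the Courant--Fischer characterization of $\lambda_{\max}(\JLmatrix^T\JLmatrix)$. The only cosmetic difference is that you sketch derivations of (1) and (2) for the count-sketch case, which the paper omits entirely.
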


The proof of properties $\textit{3)}$ and $\textit{4)}$ is given in Appendix~\ref{proof:lemma1}.

\subsection{Robust Averaging and Compatibility of JL}
To ensure Byzantine robustness, we employ an aggregation rule \AGGCall that satisfies $(\Byzantine,\RobAvgCoeff)$-Robust Averaging~\cite{guerraoui2024robust}, i.e., \cref{def:rob_avg}. We extend the definition of $(\Byzantine,\RobAvgCoeff)$-robust averaging to $(\Byzantine,\RobAvgCoeff, \RobAvgError)$-robust averaging, i.e., \cref{def:rob_avg_new}, with $0 \leq \RobAvgError \leq 1$. We prove that, with a constraint on the range of $\RobAvgError$, $(\Byzantine,\RobAvgCoeff, \RobAvgError)$-robust averaging provides convergence guarantees. We also prove that given a robust aggregation which is $(\Byzantine,\RobAvgCoeff)$-robust averaging, the composition $\compressCall \circ \AGGCall \circ\decompressCall$, with JL transform as the compressor, is $(\Byzantine,\RobAvgCoeff', \RobAvgError)$-robust averaging.

\begin{definition}[$(\Byzantine,\RobAvgCoeff)$-Robust Averaging~\cite{guerraoui2024robust}]
\label{def:rob_avg}
    For a non-negative integer $\Byzantine<\Totalclient/2$, an aggregation rule $\AGGCall$: $(\mathbb{R}^\oringinD)^{\Totalclient} \rightarrow \mathbb{R}^\oringinD $ is a $(\Byzantine,\RobAvgCoeff)$-robust averaging aggregation if there exists a (non-negative) real valued $\RobAvgCoeff$ such that, for any set of $\Totalclient$ vectors $\bm{v}_1, \cdots, \bm{v}_\Totalclient \in \mathbb{R}^\oringinD$ and any subset $S \in [\Totalclient]$ of size $\Totalclient-\Byzantine$, the following holds:
        $\| \AGGCall(\bm{v}_1, \cdots, \bm{v}_\Totalclient)-\bar{\bm{v}}_S \|^2 \leq \frac{\RobAvgCoeff}{|S|} \sum_{i\in S} \| \bm{v}_i-\bar{\bm{v}}_S \|^2\!,$
    where $\bar{\bm{v}}_S \defeq \frac{1}{|S|}\sum_{i\in S} \bm{v}_i$. The parameter $\RobAvgCoeff$ is referred to as the robustness coefficient.
\end{definition}

We extend the definition of $(\Byzantine,\RobAvgCoeff)$-robust averaging by allowing an extra error term.

\begin{definition}[$(\Byzantine,\RobAvgCoeff,\RobAvgError)$-Robust Averaging]
\label{def:rob_avg_new}
    For a non-negative integer $\Byzantine<\Totalclient/2$, an aggregation $\AGGCall$: $(\mathbb{R}^\oringinD)^{\Totalclient} \rightarrow \mathbb{R}^\oringinD $ is $(\Byzantine,\RobAvgCoeff,\RobAvgError)$-robust averaging if there exists a (non-negative) real valued $\RobAvgCoeff$ and  a real valued $0 \leq \RobAvgError \leq 1$ such that, for any set of $\Totalclient$ vectors $\bm{v}_1, \cdots, \bm{v}_\Totalclient \in \mathbb{R}^\oringinD$ and any subset $S \in [\Totalclient]$ of size $\Totalclient-\Byzantine$, let $\bar{\bm{v}}_S \defeq \frac{1}{|S|}\sum_{i\in S} \bm{v}_i$, the following holds:
    \begin{align*}
        \| \AGGCall(\bm{v}_1, \cdots, \bm{v}_\Totalclient)\!-\!\bar{\bm{v}}_S \|^2 \!\leq\! \frac{\RobAvgCoeff}{|S|} \sum_{i\in S} \| \bm{v}_i-\bar{\bm{v}}_S \|^2 \!+\! \RobAvgError\! \cdot \! \| \bar{\bm{v}}_S \|^2 \!.
    \end{align*}
\end{definition}

Notice that for $\RobAvgError=0$, $(\Byzantine,\RobAvgCoeff, \RobAvgError)$-robust averaging reduces to $(\Byzantine,\RobAvgCoeff)$-robust averaging.

In \ourscheme, the robust aggregation rule works as follows. Client $i$'s momentum update $\Momentum_i \in \mathbb{R}^\oringinD$ (we omit the notation of global iteration $\GlobalIter$ for brevity) is compressed to $\compMomentum_i \in \mathbb{R}^\compressD$ using a JL transform random matrix $\JLmatrix$. The compressed vectors are aggregated using a $(\Byzantine,\RobAvgCoeff)$-robust averaging rule $\AGGCall$ to obtain $\compMomentum_\AGGCall \in \mathbb{R}^k$. Then, $\decompressCall(\compMomentum_\AGGCall) =\JLmatrix^T \compMomentum_\AGGCall$ is computed. 
In \cref{pro:JLwithRobAvg_deterministic}, we show that given a matrix that satisfies the JL lemma, see \cref{def:JL_lemma}, and an aggregation rule that is $(\Byzantine,\RobAvgCoeff)$-robust-averaging, see \cref{def:rob_avg}, the composition $\compressCall\circ\AGGCall\circ\decompressCall$ satisfies the $(\Byzantine,\RobAvgCoeff', \RobAvgError)$-robust averaging criterion, where $\RobAvgCoeff'$ is close to $\RobAvgCoeff$ and $\RobAvgError$ is small, hence implying that the JL transform is robust-compatible under the robust averaging criterion, according to \cref{def:robust_comp_comp}.

\begin{proposition}
    \label{pro:JLwithRobAvg_deterministic}
    Given a matrix $\JLmatrix$ that satisfies the JL lemma, which we call a JL matrix, with distortion parameter $\JLepsilon$, and a $(\Byzantine,\RobAvgCoeff)$-robust averaging rule $\AGGCall$, the composition $\compressCall\circ\AGGCall\circ\decompressCall$ with the JL matrix as the compressor is $(\Byzantine,\RobAvgCoeff', \RobAvgError)$-robust averaging, with $\RobAvgCoeff'=(1+\JLepsilon)^2\cdot \RobAvgCoeff \text{\;\; and \;\;} \RobAvgError=\JLepsilon^2.$
    In other words, with small $\JLepsilon$, any JL matrix yields a robust-compatible compression under the robust averaging criterion. 
\end{proposition}

\begin{proof}
    Let $\AGGCall$ be a robust aggregation rule that satisfies $(\Byzantine,\RobAvgCoeff)$-robust averaging. We consider the case where $\AGGCall$ is combined with a JL matrix as the compression. 
    
    Given momenta vectors $\Momentum_1, \cdots, \Momentum_\Totalclient$. Let $\JLmatrix$ be a JL matrix, i.e., a linear map satisfying \cref{def:JL_lemma}. Let  $\compMomentum_i=\JLmatrix\Momentum_i$, for $i \in [\Totalclient]$, denote the compressed inputs, and let $\compMomentum_{\AGGCall}=\AGGCall(\compMomentum_1, \cdots, \compMomentum_\Totalclient)$ be the aggregation of $\AGGCall$. Following the definition of $(\Byzantine,\RobAvgCoeff)$-robust averaging, we have
    \begin{align}\label{eq:comp_kappa_averaging}
        \| \compMomentum_{\AGGCall}-\bar{\Momentum}_S' \|^2 \leq \frac{\RobAvgCoeff}{|S|} \sum_{i\in S} \| \compMomentum_i-\bar{\Momentum}_S' \|^2,
    \end{align}
    where $\bar{\Momentum}_S'\defeq \frac{1}{|S|}\sum_{i\in S} \compMomentum_i =\frac{1}{|S|}\sum_{i\in S} \JLmatrix\Momentum_i=\JLmatrix\bar{\Momentum}_S$ and $\bar{\Momentum}_S\defeq \frac{1}{|S|}\sum_{i\in S} \Momentum_i$.
    Let decompression be $\decompressCall(\compMomentum_\AGGCall)=\JLmatrix^T \!\cdot\! \compMomentum_{\AGGCall}$.
    The goal is to prove: 
    \begin{align*}
        \| \JLmatrix^T \!\cdot\! \compMomentum_{\AGGCall}-\bar{\Momentum}_S \|^2 \leq \frac{\RobAvgCoeff'}{|S|} \sum_{i\in S} \| \Momentum_i-\bar{\Momentum}_S \|^2 + \RobAvgError\! \cdot \! \| \bar{\Momentum}_S \|^2 \!,
    \end{align*}
    with $\RobAvgCoeff'$ being close to the original coefficient $\RobAvgCoeff$, and $\RobAvgError$ being small.
    To evaluate the distortion in the decompressed output and find the robustness parameter $\RobAvgCoeff'$, we bound:
    \begin{align*}
    & \| \JLmatrix^T \!\cdot\! \compMomentum_{\AGGCall}-\bar{\Momentum}_S \|^2 \\
    & \stackrel{(a)}{\leq} \! \| \JLmatrix^T \!\cdot\! \compMomentum_{\AGGCall}\!-\!\JLmatrix^T \JLmatrix \bar{\Momentum}_S \|^2\!+\!\|\JLmatrix^T \JLmatrix \bar{\Momentum}_S \!-\!\bar{\Momentum}_S\|^2 \\
    & \stackrel{(b)}{\leq} \|\JLmatrix^T\|^2 \!\cdot\! \| \compMomentum_{\AGGCall}\!-\!\bar{\Momentum}_S' \|^2\!+\!\|\JLmatrix^T \JLmatrix \bar{\Momentum}_S \!-\!\bar{\Momentum}_S\|^2 \\
    & \stackrel{(c)}{\leq} \|\JLmatrix^T\|^2 \!\cdot\! \frac{\RobAvgCoeff}{|S|} \sum_{i\in S} \| \compMomentum_i-\bar{\Momentum}_S' \|^2+\|\JLmatrix^T \JLmatrix \bar{\Momentum}_S -\bar{\Momentum}_S\|^2 \\
    & \stackrel{(d)}{\leq} \|\JLmatrix^T\|^2 \!\cdot\! \|\JLmatrix\|^2 \!\cdot\! \frac{\RobAvgCoeff}{|S|} \sum_{i\in S} \| \Momentum_i-\bar{\Momentum}_S \|^2\!+\!\|\JLmatrix^T \JLmatrix \bar{\Momentum}_S \!-\!\bar{\Momentum}_S\|^2 \\
    & \stackrel{(e)}{\leq} (1+\JLepsilon)^2\cdot \frac{\RobAvgCoeff}{|S|} \sum_{i\in S} \| \Momentum_i-\bar{\Momentum}_S \|^2+ \JLepsilon^2 \| \bar{\Momentum}_S \|^2,
\end{align*}
where $(a)$ uses the triangle inequality, $(b)$ and $(d)$ use the sub-multiplicative property of the matrix norm, $(c)$ follows from \cref{eq:comp_kappa_averaging}, and $(e)$ follows from Property $\textit{3)}$ and $\textit{4)}$ in \cref{lem:jl_property}.

Therefore, $\compressCall\circ\AGGCall\circ\decompressCall$ with a JL matrix as the compressor is $(\Byzantine,\RobAvgCoeff', \RobAvgError)$-robust averaging, where $\RobAvgCoeff'=(1+\JLepsilon)^2\cdot \RobAvgCoeff$, $\RobAvgError=\JLepsilon^2$, and $\JLepsilon$ is the small distortion parameter introduced by the JL transform. Hence, any matrix satisfying the JL lemma yields a robust-compatible compression under the robust averaging criterion.
\end{proof}

\cref{pro:JLwithRobAvg} states the robustness guarantees obtained by using any randomized construction that outputs a JL matrix with high probability, i.e., one that satisfies the properties of \cref{lem:jl_property}; see the construction in \cref{def:sparser_jl}.

\begin{observation}
    \label{pro:JLwithRobAvg}
    Consider a set $\mathcal{V}\subset \mathbb{R}^\oringinD$, $\compressD=\mathcal{O}(\JLepsilon^{-2}\log|\mathcal{V}|) \ll \oringinD$, and a random JL transform matrix $\JLmatrix \in \mathbb{R}^{\compressD\times \oringinD}$ that satisfies \cref{def:JL_lemma} with probability $1-\frac{1}{|\mathcal{V}|}$. This transform yields a robust-compatible compression under the robust averaging criterion with the same probability. 
Specifically, let $\AGGCall$ be a robust aggregation rule that is $(\Byzantine,\RobAvgCoeff)$-robust averaging. Let $\Momentum_i$ be the input vectors in the original space, and let the compressed vectors be $\compMomentum_i=\JLmatrix\Momentum_i$.
    Then with probability $1-\frac{1}{|\mathcal{V}|}$, $\compressCall\circ\AGGCall\circ\decompressCall$ is a $(\Byzantine,\RobAvgCoeff',\RobAvgError)$-robust averaging rule with a robustness coefficient $\RobAvgCoeff'=(1+\JLepsilon)^2 \cdot \RobAvgCoeff$ and error $\RobAvgError=\JLepsilon^2$.
\end{observation}

\subsection{Privacy}
In \ourscheme, we enforce DP using the Gaussian mechanism~\cite{dwork2014algorithmic}, which achieves privacy by adding noise drawn from a Gaussian distribution with zero mean and variance $\DPsigma^2$, i.e.,
    $\bm{n}_{\text{DP}} \sim \mathcal{N}(0, \DPsigma^2 I_\oringinD),$
where $I_\oringinD$ is the identity matrix of dimension $\oringinD \times \oringinD$ and $\DPsigma$ is the noise scale.

\subsection{Theoretical Guarantees}
We formally state the theoretical guarantees of \ourscheme in terms of DP, convergence, and communication efficiency. The analysis requires the following standard assumptions~\cite{allouah2023privacy}:
\begin{assumption}
\label{ass:bounded}
We assume bounded variance and gradient norms for honest clients, i.e,
    \begin{enumerate}
        \item Bounded Variance: there exists parameters $0< \sigma_{\min} \leq \sigma_{\max} <\infty$ such that for each honest client $i\in \HonestSet$ and all $\model \in \mathbb{R}^\oringinD$, $\sigma_{\min}^2 \leq \frac{1}{\DatasetSize}\sum_{(\bm{x}_j,y_j) \in \Dataset_i} \| \nabla \SampleLoss(\model;\bm{x}_j,y_j)- \nabla \Loss(\model;\Dataset_i) \|^2 \leq \sigma_{\max}^2$. 
        \item Bounded Gradient: there exists $0<C<\infty$ such that for all honest client $i\in \HonestSet$, and for all $\model \in \mathbb{R}^\oringinD$ and $(\bm{x}_j,y_j) \in \Dataset_i$, $\| \nabla \SampleLoss(\model;\bm{x}_j,y_j)\| \leq C.$
    \end{enumerate}
\end{assumption}

Similar to~\cite{allouah2023privacy}, we define $\Loss_0 \!\defeq\!\! \Loss_\HonestSet(\!\model^{(0)}\!)\!-\! \Loss^{\star}\!\!$, $\sigma_b\!=\!2(\!1\!-\!\frac{\MinibatchSize}{\DatasetSize}\!)\!\frac{\!\sigma_{\max}^2\!\!}{\MinibatchSize}\!,$
\begin{align*}
          \bar{\sigma}^2 & \defeq \frac{\sigma_b^2+d\DPsigma^2}{\Totalclient-\Byzantine}+4\RobAvgCoeff \left( \sigma_b^2 + 36\DPsigma^2(1+\frac{\oringinD}{\Totalclient-\Byzantine}) \right),\text{and } \\
          G_\mathrm{cov} &\defeq \sup_{\model \in \mathbb{R}^\oringinD} \sup_{\| v \|\leq 1}\sum_{i\in \HonestSet} \langle v, \nabla\Loss(\model;\Dataset_i)-\nabla\Loss_\HonestSet(\model) \rangle^2 / |\HonestSet|.         
    \end{align*}
    where $G_\mathrm{cov}$ is a metric for quantifying the heterogeneity between the gradients of honest clients' loss functions. 

\cref{pro:composition} provides the convergence guarantee for any robust aggregation that satisfies $(\Byzantine,\RobAvgCoeff,\RobAvgError)$-robust averaging (\cref{def:rob_avg_new}), with a constraint on $\RobAvgError$.

\begin{proposition}
\label{pro:composition}
    Consider a DP mechanism with noise scale $\DPsigma$ and parameters $\DPepsilon \geq 0, 0\leq \DPdelta \leq 1$. Suppose $\Loss_\HonestSet$ is $L$-smooth and the assumptions in \cref{ass:bounded} hold. Let $\AGGCall$ be a $(\Byzantine,\RobAvgCoeff)$-robust averaging rule, where $\Byzantine<\Totalclient/2$ and $\kappa>0$, $\compressCall$ be a robust-compatible compressor, and $\decompressCall$ be the corresponding decompressor. Suppose \ourframework with $\compressCall\circ\AGGCall\circ\decompressCall$ is $(\Byzantine,\RobAvgCoeff',\RobAvgError)$-robust averaging, $0 \leq \RobAvgError \leq 1$. We prove that the following holds. 
    \begin{enumerate}
        \item \textbf{\textit{Strongly convex}}: If $\Loss_\HonestSet$ is $\mu$-strongly convex, let  $\LearningRate^{(\GlobalIter)}=\frac{10}{\mu(\GlobalIter+a_1\frac{L}{\mu})}$, $\MomentumCoeff^{(\GlobalIter)}=1-24L\LearningRate^{(\GlobalIter)}$ and $a_1=240$. If we let $\RobAvgError \leq \frac{3}{125}$, then
        \begin{align*}
            \mathbb{E}[\Loss_\HonestSet(\model^{(\TotalGloablIter)})\!-\!\Loss^{\star}] \!\leq\! \frac{4a_1 \oringinD \RobAvgCoeff' G_\mathrm{cov}^2}{\mu}\!+\!\frac{2a_1^2L\bar{\sigma}^2}{\mu^2\TotalGloablIter}\!+\!\frac{2a_1^2L^2\Loss_0}{\mu^2\TotalGloablIter^2}.
        \end{align*}
        \item \textbf{\textit{Non-convex}}: Let $\LearningRate^{\GlobalIter} = \LearningRate= \min\{ \frac{1}{24L}, \frac{1}{8\bar{\sigma}} \sqrt{\frac{\Loss_0}{8 L \TotalGloablIter}} \}$ and $\MomentumCoeff^{(\GlobalIter)}= \MomentumCoeff =1-24L\LearningRate$. If we let $\RobAvgError < \frac{1}{15}$, then
        \begin{align*}
            & \mathbb{E} \left[ \nabla \Loss_{\mathcal{H}}(\tilde{\model}) \right] \\
            \leq & \frac{1}{A- 2C\RobAvgError} \left( \frac{27L \Loss_0}{\TotalGloablIter} + \frac{36 \bar{\sigma} \sqrt{2L \Loss_0}}{\sqrt{\TotalGloablIter}} + \frac{3}{2} \oringinD \RobAvgCoeff' G_{\text{cov}}^2 \right),
        \end{align*}
        where 
        $
        A \defeq \frac{1}{2} \left( (1-4\LearningRate L) - \frac{1}{2}(1+\LearningRate L) \MomentumCoeff^{2}  \right),
    $
    $
        C \defeq  1+\LearningRate L + \frac{1}{8} (1+ \LearningRate L) \MomentumCoeff^{2},
    $
    and $0 \! < \! A- 2C\RobAvgError \! \leq \! \frac{1}{2}$ always holds. 
    \end{enumerate}
\end{proposition}

\begin{proof}
    The proof is provided in \cref{proof:proposition3}.
\end{proof}

For \ourscheme, we prove the following theorem in terms of DP, convergence, and communication efficiency.
\begin{theorem}
\label{the:scheme}Consider a federated learning setting with $n$ clients out of which $b<n/2$ are malicious, where \ourscheme is used.
    Suppose $\Loss_\HonestSet$ is $L$-smooth and the assumptions in \cref{ass:bounded} hold. Consider a DP mechanism with noise scale $\DPsigma$ and parameters $\DPepsilon \geq 0, 0<\DPdelta <1$.  
    Let $\AGGCall$ be a $(\Byzantine,\RobAvgCoeff)$-robust averaging rule, $\compressCall$ be a random JL transform matrix $\JLmatrix\in\mathbb{R}^{k\times d}$, and $\decompressCall$ be $\JLmatrix^T$ . Let $\MinibatchSize$ and $\DatasetSize$ be the mini-batch size and dataset size, respectively. 
    The following guarantees are satisfied:
\begin{enumerate}
        \item \textbf{Differential Privacy}: There exists a constant $\chi>0$ such that, for a sufficiently small batch size $\MinibatchSize$ and the total global iteration $\TotalGloablIter>1$, if $\DPsigma~\geq~\chi \frac{2\ClipNorm}{\MinibatchSize} \max \left\{ 1,\frac{\MinibatchSize\sqrt{\TotalGloablIter \log(1/\DPdelta)}}{\DatasetSize \DPepsilon} \right\}$, \ourscheme satisfies $(\DPepsilon,\DPdelta)$-differential privacy;
        \item \textbf{Convergence}: Since with probability $1-\frac{1}{|\mathcal{V}|}$, the composition $\compressCall\circ\AGGCall\circ\decompressCall$ forms a $(\Byzantine,\RobAvgCoeff',\RobAvgError)$-robust averaging rule with a robustness coefficient $\RobAvgCoeff'=(1+\JLepsilon)^2 \cdot \RobAvgCoeff$ and error $\RobAvgError=\JLepsilon^2$,
         \ourscheme satisfies the convergence guarantees in \cref{pro:composition} with the same $\RobAvgCoeff'$ and $\RobAvgError$. To satisfy the required conditions on the error term $\RobAvgError$, we need
        $$
        0 \leq \JLepsilon \leq \sqrt{\frac{3}{125}} \approx 0.15,
        $$
        for the strongly convex case, and 
        $$
        0 \leq \JLepsilon < \sqrt{\frac{1}{15}} \approx 0.25,
        $$
        for the non-convex case.
     \item \textbf{Communication Efficiency}: 
     The bidirectional communication cost, i.e., from each client to the federator and from the federator to each client, is reduced from
     $\mathcal{O}(\oringinD)$ to $\mathcal{O}(\compressD)$ per round.
    \end{enumerate}

\end{theorem}
\begin{proof}
    The proof is provided in Appendix~\ref{app:proof}.
\end{proof}

\begin{remark}
Regarding convergence (robustness), a smaller $\JLepsilon$ means that
\begin{enumerate*}[label=(\emph{\alph*})]
\item the $\RobAvgCoeff'$ provided by the composition $\compressCall\circ\AGGCall\circ\decompressCall$ is closer to the $\RobAvgCoeff$ guaranteed by $\AGGCall$, i.e., the resulting scheme is as robust as $\AGGCall$; and
\item the additional error term $\RobAvgError$ is smaller.
\end{enumerate*}
This indicates a trade-off between communication efficiency and robustness. A larger compression rate results in a higher $\JLepsilon$, which in turn makes $\RobAvgCoeff'$ deviate further from the original $\RobAvgCoeff$ and yields a larger $\RobAvgError$. Conversely, a smaller compression rate increases the communication overhead but allows $\RobAvgCoeff'$ to remain closer to $\RobAvgCoeff$, and also leads to a smaller error term $\RobAvgError$.
\end{remark}

\begin{figure*}[tb]
    \centering
    \begin{minipage}{0.32\textwidth}
        \centering
        \includegraphics[width=0.85\linewidth, height=0.13\textheight]{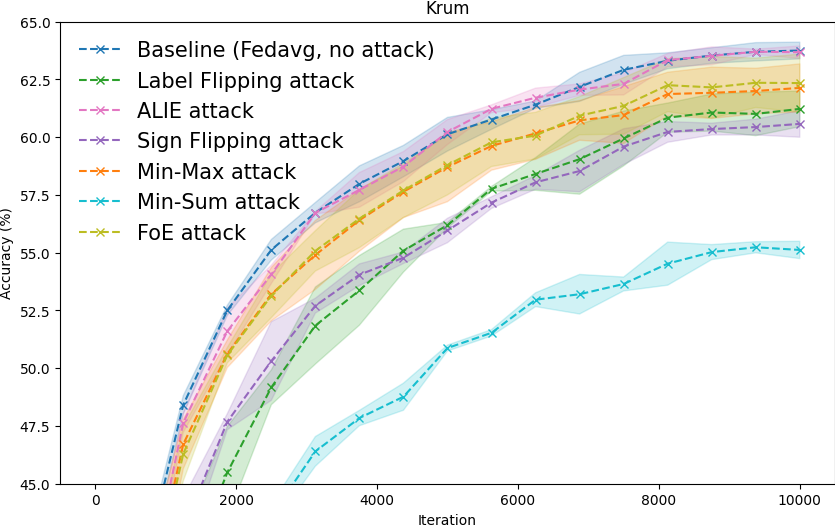}
    \end{minipage}%
    \begin{minipage}{0.32\textwidth}
        \centering
        \includegraphics[width=0.85\linewidth, height=0.13\textheight]{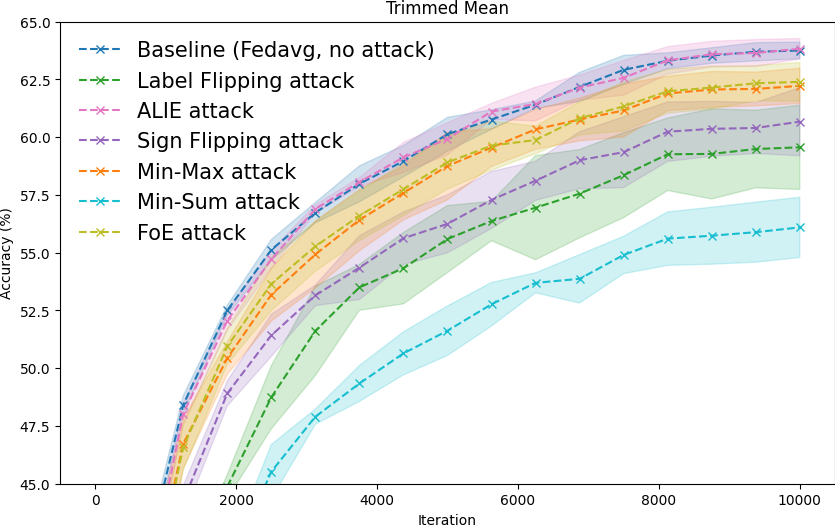}
    \end{minipage}
    \begin{minipage}{0.32\textwidth}
        \centering
        \includegraphics[width=0.85\linewidth, height=0.13\textheight]{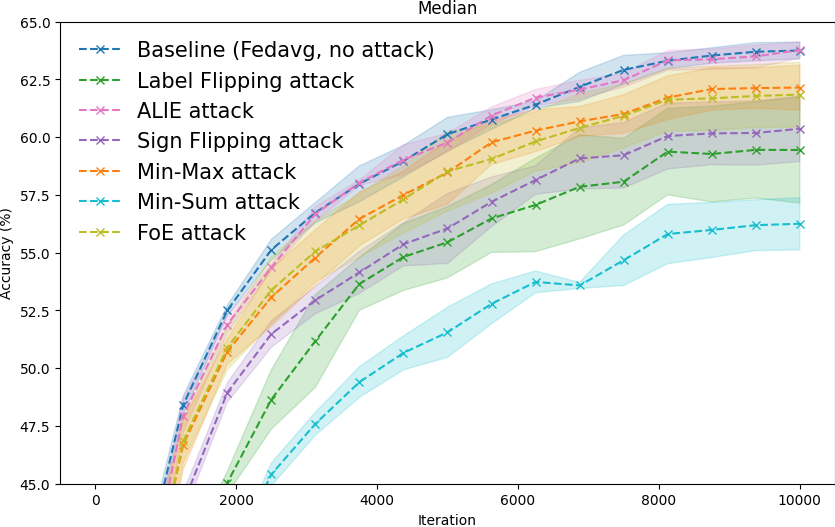}
    \end{minipage}
    \caption{Test accuracy (unit: \%) on non-i.i.d. CIFAR-10 under various attacks. We set $\NoiseMultiplier=0.1$ and compression rate $\frac{\oringinD}{\compressD}=10$.}
    \label{fig:cifar10}
\end{figure*}

\begin{table*}[tb] %
\centering
\scalebox{0.9}{
\begin{tabular}{c|c!{\vrule width 0.8pt}c|c|c|c|c|c}
    
    & $\NoiseMultiplier$ & LF Attack & ALIE Attack & SF Attack & Min-Max Attack & Min-Sum Attack & FoE Attack \\ \hline
    \multirow{2}{*}{Krum} & $0.1$ & $83.6\pm0.5$ & $83.0\pm0.3$ & $82.1\pm0.1$ & $83.6\pm0.4$ & $75.1\pm2.6$ & $83.6\pm0.5$ \\
    & $1$ & $59.4\pm3.0$ & $73.2\pm0.4$ & $72.1\pm0.6$ & $73.1\pm0.5$ & $73.7\pm0.5$ & $73.6\pm0.7$ \\ \hline
    \multirow{2}{*}{TM} & $0.1$ & $83.6\pm0.5$ & $83.2\pm0.2$ & $82.4\pm0.2$ & $83.6\pm0.4$ & $75.7\pm2.6$ & $83.6\pm0.4$ \\
    & $1$ & $60.3\pm1.7$ & $73.2\pm0.4$ & $72.9\pm0.6$ & $73.4\pm0.5$ & $74.7\pm0.3$ & $73.6\pm0.8$ \\ \hline
    \multirow{2}{*}{Med} & $0.1$ & $83.6\pm0.5$ & $83.2\pm0.2$ & $82.4\pm0.2$ & $83.6\pm0.4$ & $75.1\pm2.5$ & $83.7\pm0.6$ \\
    & $1$ & $59.8\pm2.1$ & $73.1\pm0.4$ & $73.0\pm0.7$ & $73.5\pm0.5$ & $74.5\pm0.4$ & $73.6\pm0.7$
\end{tabular}}
\vspace{-0.1cm}
\caption{Test accuracy (unit: \%) on non-i.i.d. Fashion MNIST under various attacks and different noise multipliers. We set the compression rate $\frac{\oringinD}{\compressD}=10$. The baseline reaches an accuracy of $84.0\pm0.2\%$ for $\NoiseMultiplier=0.1$ and $75.8\pm0.2\%$ for $\NoiseMultiplier=1$.}
\label{tab:fmnist_nm}
\end{table*}

\begin{table*}[tb] %
\centering
\scalebox{0.9}{
\begin{tabular}{c|c!{\vrule width 0.7pt}c|c|c|c|c|c}
    
    & Compression Rate & LF Attack & ALIE Attack & SF Attack & Min-Max Attack & Min-Sum Attack & FoE Attack \\ \hline
    \multirow{3}{*}{Krum} & $10$ & $61.2\pm0.8$ & $63.7\pm0.3$ & $60.6\pm0.6$ & $62.1\pm1.0$ &  $55.1\pm0.4$ & $62.3\pm1.3$ \\
    & $30$ & $59.6\pm1.5$ & $62.0\pm0.8$ & $59.2\pm0.9$ & $61.3\pm1.0$ & $54.1\pm0.5$ & $60.8\pm1.1$\\
    & $50$ & $57.3\pm1.0$ & $60.0\pm0.4$ & $57.5\pm0.7$ & $60.1\pm1.3$ & $53.4\pm0.5$ & $60.1\pm1.0$ \\ \hline
    \multirow{3}{*}{TM} & $10$ & $59.6\pm1.8$ & $63.8\pm0.4$ & $60.7\pm1.5$ & $62.2\pm0.8$ & $56.1\pm1.3$ & $62.4\pm0.8$ \\
    & $30$ & $58.7\pm1.0$ & $61.9\pm0.5$ & $59.5\pm1.1$ & $61.4\pm1.2$ & $55.4\pm0.6$ & $60.5\pm0.7$ \\
    & $50$ & $57.2\pm0.6$ & $60.3\pm0.4$ & $57.6\pm1.0$ & $60.2\pm1.3$ & $54.2\pm0.6$ & $60.0\pm1.0$ \\ \hline
    \multirow{3}{*}{Med} & $10$ & $59.4\pm2.3$ & $63.8\pm0.4$ & $60.4\pm1.4$ & $62.2\pm1.0$ & $56.3\pm1.1$ & $61.8\pm1.4$ \\
    & $30$ & $59.0\pm1.2$ & $62.1\pm0.7$ & $59.2\pm1.2$ & $61.5\pm1.1$ & $55.3\pm0.8$ & $61.0\pm0.9$ \\
    & $50$ & $57.2\pm0.8$ & $60.4\pm0.2$ & $57.8\pm1.3$ & $60.0\pm1.1$ & $54.3\pm0.7$ & $60.0\pm0.7$
\end{tabular}}
    
\caption{Test accuracy (unit: \%) on non-i.i.d. CIFAR-10 under various attacks at compression rates $10$, $30$, and $50$. Baselines for these rates are $63.8\pm0.4\%$, $62.2\pm0.5\%$, and $61.7\pm0.2\%$, respectively. We set the noise multiplier $\NoiseMultiplier=0.1$.}
\label{tab:cifar_compress_rate}
\end{table*}

    

\begin{table*}[!htbp] %
\centering
\begin{tabular}{c!{\vrule width 0.8pt}c|c|c|c|c|c}
    $\NoiseMultiplier$ & LF Attack & ALIE Attack & SF Attack & Min-Max Attack & Min-Sum Attack & FoE attack \\ \hline
    $0.2$ & $54.6\pm0.1$ & $59.5\pm0.2$ & $56.8\pm1.1$ & $57.9\pm0.9$ & $53.5\pm0.9$ & $57.8\pm1.2$ \\ 
\end{tabular}
\vspace{-0.05cm}
\caption{We set noise multiplier $\NoiseMultiplier=0.2$. We report the test accuracy (unit: \%) on non-i.i.d. CIFAR-10 under various attacks and use Trimmed Mean (TM) as the robust aggregation. Baseline for noise multiplier $=0.2$ is $59.6\pm0.3\%$.}
\label{tab:cifar_noise_multiplier}
\end{table*}


\section{Experiments}
We evaluate the practical performance of \ourscheme by conducting experiments on three datasets: Fashion MNIST~\cite{xiao2017fashion} and CIFAR-10~\cite{krizhevsky2009learning}, both distributed across $\Totalclient=15$ clients, and FEMNIST~\cite{caldas2018leaf} distributed across $\Totalclient=50$ clients.

\subsection{Training Parameters and Data Distribution}
For Fashion MNIST, we set minibatch size $\MinibatchSize=60$, total global iterations $\TotalGloablIter=2000$, and learning rate $\LearningRate=0.25$. We train a neural network consisiting of three fully connected layers (in total $535818$ parameters). 
For CIFAR-10, we set minibatch size $\MinibatchSize=128$, total global iterations $\TotalGloablIter=10000$, and learning rate $\LearningRate^{(\GlobalIter)}=0.25$ for $\GlobalIter \leq 8000$, and $\LearningRate^{(\GlobalIter)}=0.025$ afterward. We use a convolutional neural network composed of two convolutional blocks followed by a linear classifier (in total $789706$ parameters). 
For FEMNIST, we set minibatch size $\MinibatchSize=128$, total global iterations $\TotalGloablIter=1000$, and learning rate $\LearningRate^{(\GlobalIter)}=0.2$ for $\GlobalIter \leq 800$, and $\LearningRate^{(\GlobalIter)}=0.02$ afterward. We use a convolutional neural network consisting of two convolutional layers and a two-layer fully connected classifier (in total $1690046$ parameters).

All experiments are conducted with a fixed momentum coefficient $\MomentumCoeff=0.9$. 
We use the Rectified Linear Unit (ReLU) as the activation function and Cross-Entropy as the loss function. In each iteration, clients randomly sample a minibatch from their local training dataset using subsampling without replacement and perform local training on the minibatch.

To simulate non-IID data, for Fashion MNIST and CIFAR-10, we randomly divide the clients into $10$ groups. 
A training sample with label $j$ is assigned to group $j$ w.p. $a>0$, and to each of the remaining groups w.p. $\frac{1-a}{9}$. Within each group, data is uniformly distributed among the clients. In our experiments, we set $a=0.5$ to reflect a heterogeneous setting. 
FEMNIST is inherently heterogeneous. FEMNIST consists of 3500 clients with a varying number of samples per client, with a mean $226.83$. We filter the clients having at least $380$ samples, and randomly sample $\Totalclient=50$ clients from them.

\subsection{Compression Method}
For communication efficiency, we use the Count Sketch~\cite{kane2014sparser}, cf. \cref{def:sparser_jl}, to construct the JL transform matrix $\JLmatrix$. The number of block matrices is set to be $p=10$. For Fashion MNIST, the compression rate $\frac{\oringinD}{\compressD}=10$. Compression rates are varied across $\{10, 30, 50\}$ for CIFAR-10, and across $\{10, 50, 100, 500, 1000, 2000\}$ for FEMNIST.

\subsection{Byzantine Robustness}
We assume that $20\%$ of the clients are malicious, i.e., $\Byzantine=3$ for Fashion MNIST and CIFAR-10 and $\Byzantine=10$ for FEMNIST. Since \ourscheme supports arbitrary robust aggregation rules satisfying $(\Byzantine,\RobAvgCoeff)$-robust averaging, we test it with the following rules: Krum~\cite{blanchard2017machine}, Trimmed Mean (TM)~\cite{yin2018byzantine} and Median (Med)~\cite{yin2018byzantine}. These robust aggregations rules are proved to be $(\Byzantine,\RobAvgCoeff)$-robust averaging~\cite{allouah2023fixing, guerraoui2024robust}. We also use the Nearest Neighbor Mixing (NNM)~\cite{allouah2023fixing}, where 
each client's update is averaged with those of a subset of its neighbors to enhance the robustness under heterogeneity. Combining Krum, TM, or  Med with NNM improves the robustness coefficient $\RobAvgCoeff$ of the $(\Byzantine,\RobAvgCoeff)$-robust averaging rule \cite{allouah2023fixing}. 
The malicious clients perform one of the following attacks: Label Flipping (LF) attack~\cite{allen2020byzantine}, a-little-is-enough (ALIE) attack~\cite{baruch2019little}, Sign Flipping (SF) attack~\cite{allen2020byzantine}, Min-Max attack~\cite{shejwalkar2021manipulating}, Min-Sum attack~\cite{shejwalkar2021manipulating} and FoE attack~\cite{xie2020fall}.

\subsection{Differential Privacy}
To implement DP, we set the Gaussian noise scale as $\DPsigma=\frac{2\ClipNorm}{\MinibatchSize}\times \NoiseMultiplier$ with the clipping threshold $\ClipNorm=2$ for Fashion MNIST, $\ClipNorm=3$ for FEMNIST, and $\ClipNorm=4$ for CIFAR-10, and $\NoiseMultiplier$ is the noise multiplier. Since the DP guarantee $(\DPepsilon,\DPdelta)$ is sensitive to the choice of sampling strategy, e.g., Poisson sampling and sampling with(out) replacement, we report the noise multiplier $\NoiseMultiplier$ instead. 
We use Opacus~\cite{opacus} to compute the privacy budget $(\DPepsilon, \DPdelta)$ via RDP accounting.
Note that Opacus assumes Poisson subsampling, whereas our simulation uses sampling without replacement (fixed-size minibatches). Since Poisson subsampling leads to a looser privacy bound under RDP analysis than fixed-size sampling~\cite{wang2019subsampled}, the $\DPepsilon$ reported by Opacus is a conservative upper bound on the actual privacy loss incurred in our setting. 

For Fashion MNIST, we choose noise multipliers $\NoiseMultiplier=\{0.1,1\}$. The relatively large noise multiplier, i.e., $\NoiseMultiplier= 1$, corresponds to privacy budget $(\DPepsilon,\DPdelta)=(4.5,10^{-5})$. For CIFAR-10, we choose relatively small noise multipliers, e.g., $\NoiseMultiplier= \{0.1,0.2\}$, which correspond to large privacy budgets $\DPepsilon>300$ and $\DPdelta=10^{-5}$. For FEMNIST, we choose $\NoiseMultiplier=\{ 0.1,0.5,1 \}$, where $\NoiseMultiplier=1$ corresponds to $(\DPepsilon,\DPdelta)=(36.8,10^{-5})$. These configurations allow us to study the interaction between DP, robustness, and communication efficiency in a regime where utility is preserved. 


\begin{table*}[tb] %
\centering
\begin{tabular}{c|c!{\vrule width 0.8pt}c|c|c|c|c}
    
    &  LF Attack & ALIE Attack & SF Attack & Min-Max Attack & Min-Sum Attack & FoE Attack \\ \hline
    Krum &  $75.0\pm1.62$ & $73.5\pm1.16$ & $70.0\pm1.13$ & $73.7\pm1.81$ & $57.0\pm1.10$ & $29.9\pm4.97^{\,\star}$ \\ \hline
    TM &  $75.2\pm2.27$ & $73.9\pm0.21$ & $70.4\pm1.12$ & $73.5\pm0.81$ & $59.9\pm0.25$ & $75.0\pm1.47$ \\ \hline
    Med & $75.3\pm2.38$ & $73.8\pm1.18$ & $70.2\pm1.14$ & $72.3\pm0.65$ & $59.5\pm0.49$ & $75.1\pm1.57$ 
\end{tabular}
\vspace{-0.1cm}
\caption{Test accuracy (unit: \%) on FEMNIST under various attacks and different robust aggregations. We set the compression rate $\frac{\oringinD}{\compressD}=10$ and $\NoiseMultiplier=0.1$. The baseline reaches an accuracy of $75.3\pm1.25 \%$. }
\captionsetup{justification=justified,parskip=1pt}
\vspace{-0.1in}
\caption*{\footnotesize $\star$ We also report the test accuracy on FEMNIST under FoE attack and with Krum as the robust aggregation rule without DP and compression ($\NoiseMultiplier=0$ and $\frac{\oringinD}{\compressD}=1$). The accuracy reached a maximum of $78.6\%$, but was ultimately defeated by the FoE attack, achieving a final accuracy of $6.26 \pm0.59\%$. } 
\label{tab:femnist_aggrule}
\end{table*} 

\subsection{Numerical Results}
Each experiment is repeated using $3$ different random seeds. All baselines are run with the respective DP enforcement, i.e., noise multiplier, the compression rate, using FedAvg~\cite{mcmahan2017communication} as the aggregation rule, and without any malicious attacks.

\cref{fig:cifar10} shows the test accuracy of \ourscheme on non-i.i.d. CIFAR-10 under various attacks, compared to the baseline, with $\NoiseMultiplier=0.1$ and $\frac{\oringinD}{\compressD}=10$.
The experimental results in \cref{fig:cifar10} indicate that, given robust aggregation rules satisfying the robust averaging criterion, the use of the JL transform as the compression method preserves robustness against malicious attacks.
\cref{tab:fmnist_nm} shows the test accuracy on non-i.i.d. Fashion MNIST under various attacks and noise multipliers. It demonstrates the robustness of using robust-compatible compression and shows that as the noise multiplier increases, utility decreases while stronger privacy guarantees are achieved. 
\cref{tab:cifar_compress_rate} summerizes the performance of different robust aggregation rules aggregating with count sketch under various attacks on non-i.i.d. CIFAR-10, at increasing compression rate, and noise multiplier $\NoiseMultiplier=0.1$. It suggests that a higher compression rate introduces greater distortion, and therefore, while improving communication efficiency, can lead to performance loss.
%
\cref{tab:cifar_noise_multiplier} gives the test accuracy on non-i.i.d. CIFAR-10 under various attacks, with a slightly higher noise multiplier, i.e., $\NoiseMultiplier=0.2$, which also leads to a decrease in the utility.

\begin{figure}[!tb]
\vspace{-0.15in}
    \centering
    \includegraphics[width=0.67\linewidth]{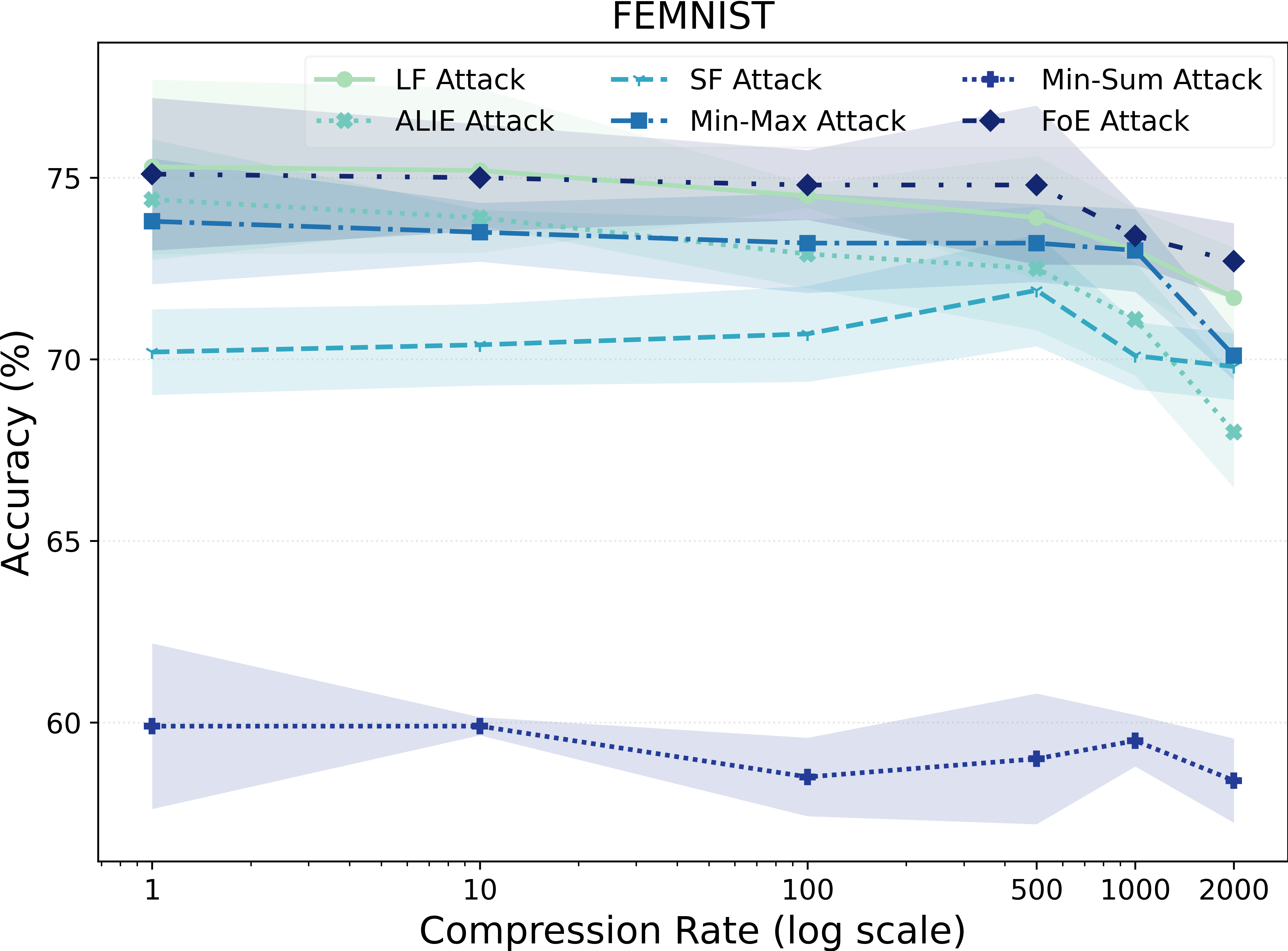}
    \caption{Test accuracy (unit: \%) on FEMNIST under various attacks and different compression rates $1,10,100,500,1000$ and $2000$. We set the noise multiplier $\NoiseMultiplier=0.1$ and use Trimmed Mean as the robust aggregation rule.}
    \label{fig:femnist_compression}
    \vspace{-0.07in}
\end{figure}

\begin{figure}[!tb]
\vspace{-0.05in}
    \centering
    \includegraphics[width=\linewidth]{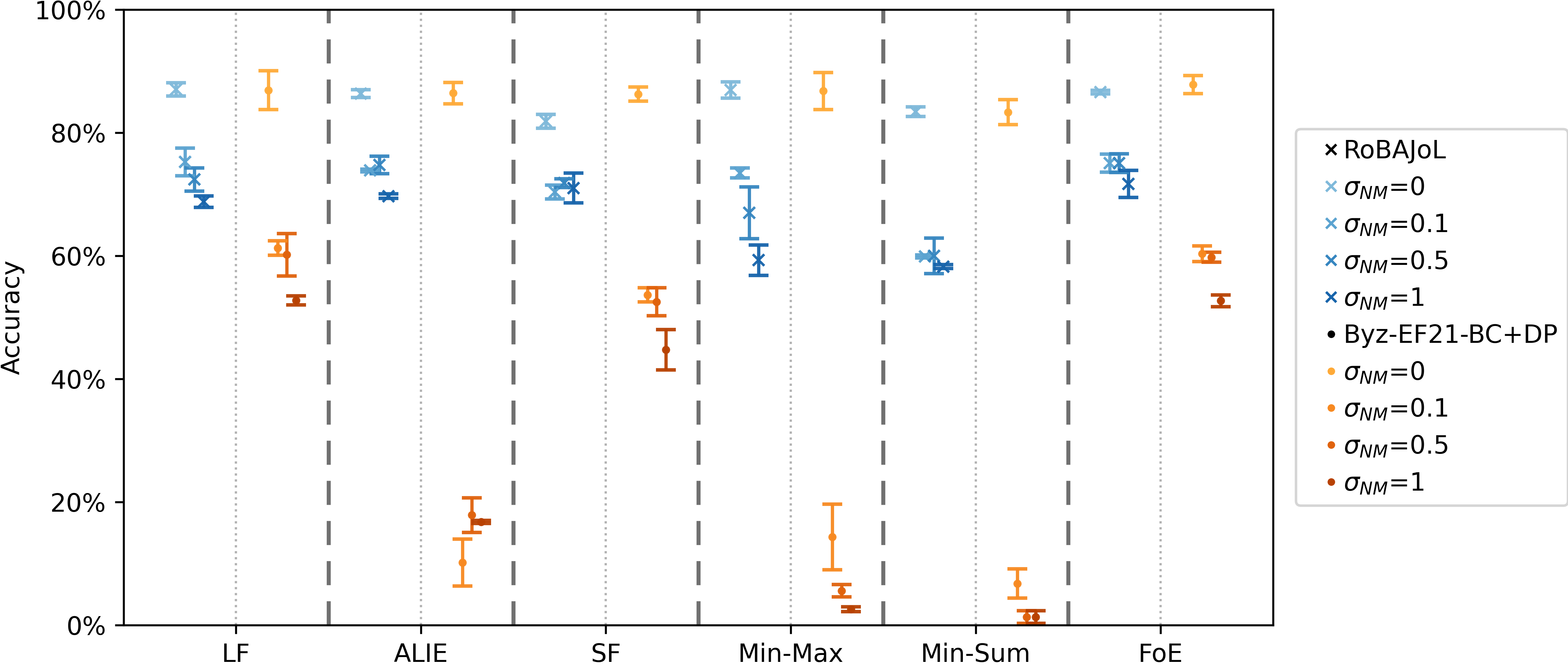}
    \caption{Test accuracy (unit: \%) on FEMNIST. We compare \ourscheme (on the left of each dotted line) with Byz-EF21-BC augmented with DP (on the right of each dotted line) under varying noise multipliers; Here, $\NoiseMultiplier=0$ indicates that no DP is applied. Both schemes use TM as the robust aggregation.}
    \label{fig:ef21}
    \vspace{-0.07in}
\end{figure}

We conduct simulations on FEMNIST under various attacks and across different compression rates, see \cref{fig:femnist_compression}, and across different robust aggregation rules, see \cref{tab:femnist_aggrule}.


We also compare \ourscheme with Byz-EF21-BC~\cite{rammal2024communication},  
to which we additionally incorporate differential privacy. Byz-EF21-BC allows any robust aggregation combined with any biased compressor (we use top-$k$ sparsification in the simulation) to remain robust by carefully modifying the updates. To allow a fair comparison, we clip the gradients and add noise directly to the averaged gradient, and use the NNM before aggregation to enhance robustness under heterogeneity. The comparison with varying noise multipliers is shown in \cref{fig:ef21}, demonstrating that, even though Byz-EF21-BC keeps any robust aggregation robust when combining with the biased compressor, i.e., top-$k$ sparsification ($\NoiseMultiplier=0$), however, enforcing DP is not straightforward and leads to a large drop in accuracy ($\NoiseMultiplier=\{ 0.1,0.5,1 \}$).

\section{Conclusion}
In this work, we propose \ourframework, a unified approach ensuring differential privacy, Byzantine robustness, and communication efficiency. We introduce the concept of robust-compatible compression, instantiated as \ourscheme, which uses the JL transform for compression, robust averaging to guarantee robustness, and the Gaussian DP mechanism to ensure privacy. Theoretical analysis and experiments show that our approach preserves robustness and privacy while significantly reducing communication overhead.


\bibliographystyle{ieeetr}  
\bibliography{cleaned_refs}

\appendix
\subsection{Proof \cref{lem:jl_property}}
\label{proof:lemma1}

    
Since the first and second properties are proved in the references, we prove the third and fourth properties of \cref{lem:jl_property}.

    \textit{(i) Bound on the decompression error: } We first rewrite the decompression error as
    \begin{align*}
        \| \JLmatrix^T \JLmatrix \bm{v} - \bm{v} \|^2 & = \| (\JLmatrix^T \JLmatrix - I_\oringinD) \bm{v} \|^2 \leq \| \JLmatrix^T \JLmatrix - I_\oringinD \|^2 \cdot \| \bm{v} \|^2, 
    \end{align*} 
    where $I_\oringinD \in \mathbb{R}^{\oringinD \times \oringinD}$ is an identity matrix. For a random JL transform matrix $\JLmatrix$, which satisfies JL lemma (\cref{def:JL_lemma}) w.p. $1-\frac{1}{|\mathcal{V}|}$, we have
    $
        1-\JLepsilon \leq \frac{\bm{v}^T \JLmatrix^T \JLmatrix \bm{v}}{\bm{v}^T \bm{v}} \leq 1+\JLepsilon
    $
    with the same probability. Thanks to Rayleigh quotient~\cite{horn2012matrix}, the middle term of this inequality is lower bounded by the smallest eigenvalue of $\JLmatrix^T \JLmatrix$ and upper bounded by its largest eigenvalue. Thus, we have
    $
        1-\JLepsilon \leq \lambda_i \leq 1+\JLepsilon,
    $
    where $\lambda_i$'s are the eigenvalues of the matrix $\JLmatrix^T \JLmatrix$, w.p. $1-\frac{1}{|\mathcal{V}|}$. Therefore, the eigenvalues of the matrix $\JLmatrix^T \JLmatrix - I_\oringinD$ are in $[-\JLepsilon, \JLepsilon]$, and the eigenvalues of the matrix $(\JLmatrix^T \JLmatrix - I_\oringinD)^2$ are in $[0, \JLepsilon^2]$ w.p. $1-\frac{1}{|\mathcal{V}|}$. Since the spectral norm of a matrix $\bm{M}$ is the square root of the largest eigenvalue of the matrix $\bm{M}^T\bm{M}$~\cite{meyer2023matrix}, we have
    \begin{align*}
        \| \JLmatrix^T \JLmatrix - I_\oringinD \| & = \sqrt{\lambda_{\max}\left((\JLmatrix^T \JLmatrix - I_\oringinD)^T(\JLmatrix^T \JLmatrix - I_\oringinD)\right)} 
        \leq \JLepsilon. 
    \end{align*}
    Therefore, 
       $ \| \JLmatrix^T \JLmatrix \bm{v} - \bm{v} \|^2 \leq \JLepsilon^2 \| \bm{v} \|^2$
    holds w.p. $1-\frac{1}{|\mathcal{V}|}$.
    
    \textit{(ii) Bound on the squared spectral norm:} We aim to bound the spectral norm of the matrix $\JLmatrix$ and its transpose $\JLmatrix^T$, i.e., the matrix norm induced by the Euclidean vector norm, denoted $\| \JLmatrix \|$ and $\| \JLmatrix^T \|$. 
    We can express the spectral norm~\cite{meyer2023matrix} as
       $ \| \JLmatrix \| =\sqrt{\lambda_{\max}(\JLmatrix^T \JLmatrix)} = \sqrt{\lambda_{\max}(\JLmatrix \JLmatrix^T)} = \| \JLmatrix^T \|$,
    where the second equality holds because the matrices $\JLmatrix^T \JLmatrix$ and $\JLmatrix\JLmatrix^T$ share the same non-zero eigenvalues.
    
    Since the eigenvalues of $\JLmatrix^T \!\JLmatrix$ are upper bounded by $1\!+\!\JLepsilon$ w.p. $1\!-\!\frac{1}{|\mathcal{V}|}$, thus
       $ \| \JLmatrix^T \|^2 \!\!=\!\! \| \JLmatrix \|^2 \!\!=\!\!\lambda_{\max}(\JLmatrix^T \JLmatrix)\!\leq\!1\!+\!\JLepsilon$
    w.p. $1\!-\!\frac{1}{|\mathcal{V}|}$, which concludes the proof.

\subsection{Proof \cref{pro:composition}}
\label{proof:proposition3}
In the convergence proof of~\cite{allouah2023privacy}, the authors analyze the algorithm with aggregation $\AGGCall$, where the inputs are $\Momentum_i^{(t)}$ for $i \in {n}$, and the output is $\Psi^{(\GlobalIter)}=\AGGCall(\Momentum_1^{(t)},\cdots,\Momentum_n^{(t)})$, and $\AGGCall$ is $(\Byzantine,\RobAvgCoeff)$-robust averaging. In \ourscheme, the composition $\compressCall \circ \AGGCall \circ \decompressCall$ provides a $(\Byzantine,\RobAvgCoeff,\RobAvgError)$-robust averaging aggregation rule, with a different $\RobAvgCoeff$ value.

\subsubsection{Strongly convex}
    Assume that $\Loss_{\mathcal{H}}$ is $L$-smooth and $\mu$-strongly convex, and the composition $\compressCall \circ \AGGCall \circ \decompressCall$ is $(\Byzantine,\RobAvgCoeff,\RobAvgError)$-robust averaging. Throughout the proof, we adopt the same definitions of the learning rate and the momentum parameter, i.e., $\LearningRate^{(\GlobalIter)}=\frac{10}{\mu(\GlobalIter+a_1 \frac{L}{\mu})}$, and $\MomentumCoeff = 1-24L \LearningRate^{(\GlobalIter)}$, where $a_1=240$, as~\cite{allouah2023privacy}. We also have $\LearningRate^{(\GlobalIter)} \leq \LearningRate^{(0)} =\frac{10}{240 \mu \frac{L}{\mu}}=\frac{1}{24L}$. 
    
    We bound the error $\mathcal{E}^{(\GlobalIter)}$ between the output $\Psi^{(\GlobalIter)}=\JLmatrix^T \AGGCall(\compMomentum^{(t)},\cdots,\compMomentum^{(t)})$ and the average honest momentum $\bar{\Momentum}^{(\GlobalIter)} \defeq \frac{1}{|\mathcal{H}|}\sum_{i \in \mathcal{H}}{\Momentum_i^{(\GlobalIter)}}$, i.e., $\mathcal{E}^{(\GlobalIter)} \defeq \Psi^{(\GlobalIter)}-\bar{\Momentum}^{(\GlobalIter)}$. Define $\Delta^{(\GlobalIter)}=\lambda_{\max} \left( \frac{1}{|\mathcal{H}|} \sum_{i\in \mathcal{H}} ( \Momentum_i^{(\GlobalIter)}-\bar{\Momentum}^{(\GlobalIter)} ) ( \Momentum_i^{(\GlobalIter)}-\bar{\Momentum}^{(\GlobalIter)} )^T \right)$, where $\lambda_{\max}$ is the maximum eigenvalue, and $\delta^{(\GlobalIter)} \defeq \bar{\Momentum}^{(\GlobalIter)} - \nabla\Loss_\HonestSet(\model^{(\GlobalIter)})$.
    Since 
    \begin{align}
    \label{eq:trace}
    & \frac{\RobAvgCoeff}{|S|} \sum_{i\in S} \| \Momentum_i^{(\GlobalIter)}- \bar{\Momentum}^{(\GlobalIter)} \|^2 \\
    \nonumber = & \text{tr} \left( \frac{1}{|\mathcal{H}|} \sum_{i\in \mathcal{H}} ( \Momentum_i^{(\GlobalIter)}-\bar{\Momentum}^{(\GlobalIter)} ) ( \Momentum_i^{(\GlobalIter)}-\bar{\Momentum}^{(\GlobalIter)} )^T \right) \leq  \oringinD \Delta^{(\GlobalIter)},
    \end{align}
    where the last inequality is due to the trace of a matrix being the sum of its eigenvalues. We can bound the error between the output and the average honest momentum,
    \begin{align}
    \label{eq:error}
        & \mathbb{E}[\| \mathcal{E}^{(\GlobalIter)} \|^2] =  \mathbb{E}\left[ \| \Psi^{(\GlobalIter)}-\bar{\Momentum}^{(\GlobalIter)} \|^2 \right] \\ \nonumber
        \stackrel{(a)}{\leq} & \mathbb{E} \left[ \frac{\RobAvgCoeff}{|\mathcal{H}|} \sum_{i\in \mathcal{H}} \| \Momentum_i^{(\GlobalIter)}-\bar{\Momentum}^{(\GlobalIter)} \|^2+ \RobAvgError \| \bar{\Momentum}^{(\GlobalIter)}\|^2 \right] \\ \nonumber
        \stackrel{(b)}{\leq} & \oringinD \RobAvgCoeff \mathbb{E} \left[ \Delta^{(\GlobalIter)}\right] + \RobAvgError \mathbb{E} \left[  \| \bar{\Momentum}^{(\GlobalIter)}\|^2\right] \\ \nonumber
        \stackrel{(c)}{\leq} & \oringinD \RobAvgCoeff \mathbb{E} \left[ \Delta^{(\GlobalIter)}\right] + 2\RobAvgError \mathbb{E} \left[  \| \delta^{(\GlobalIter)} \|^2\right] + 2\RobAvgError \mathbb{E} \left[  \| \nabla\Loss_\HonestSet(\model^{(\GlobalIter)}) \|^2\right],
    \end{align}
    where $(a)$ uses \cref{def:rob_avg_new}, $(b)$ is obtained by plugging \cref{eq:trace} and rearranging the terms, $(c)$ is because of the definition of $\delta^{(\GlobalIter)}$, and the fact that $\| \delta^{(\GlobalIter)}+\nabla\Loss_\HonestSet(\model^{(\GlobalIter)}) \|^2 \leq 2  \| \delta^{(\GlobalIter)} \|^2 + 2 \| \nabla\Loss_\HonestSet(\model^{(\GlobalIter)}) \|^2 $.
    As in~\cite{allouah2023privacy}, we define $$V^{(\GlobalIter)} \!\! \defeq \!\! \left(\! t+a_1\frac{L}{\mu} \!\right)^2\!\! \mathbb{E} \!\left[ \Loss_\HonestSet(\model^{(\GlobalIter)}\!)\!- \!\Loss^{\star} \!+\! \frac{z_1}{L} \| \delta^{(\GlobalIter)} \|^2 \!\!+\! \oringinD\RobAvgCoeff \cdot \frac{z_2}{L}\Delta^{(\GlobalIter)} \!\right]\!,$$ where $z_1=\frac{1}{16}$, $z_2=2$, $$
    W^{(\GlobalIter)} \defeq \mathbb{E} \left[ \Loss_\HonestSet(\model^{(\GlobalIter)})- \Loss^{\star} \!+\! \frac{z_1}{L} \| \delta^{(\GlobalIter)} \|^2 \!+\! \oringinD\RobAvgCoeff \cdot \frac{z_2}{L}\Delta^{(\GlobalIter)} \right],$$
    $\bar{\sigma}_{b}^2 \defeq 2(1-\frac{\MinibatchSize}{\DatasetSize})\frac{\sigma_{\max}^2}{\MinibatchSize}$, $\bar{\sigma}_{DP}^2 \defeq \bar{\sigma}_{b}^2+\oringinD \cdot \DPsigma^2$, and $\bar{\sigma} \defeq \frac{\sigma_b^2 + d \sigma_{DP}^2}{\Totalclient-\Byzantine}+4\oringinD \RobAvgCoeff \left( \sigma_b^2 + 36 \sigma_{DP}^2 (1+\frac{\oringinD}{\Totalclient-\Byzantine}) \right)$. We also define
    $$
        A \defeq \frac{1}{2} \left( (1-4\LearningRate^{(\GlobalIter)} L) - 8z_1(1+\LearningRate^{(\GlobalIter)}L) \MomentumCoeff^{(\GlobalIter)^2}  \right),
    $$
    $$
        B \!\defeq \!\!  -\frac{1}{z_1} \!(1+2 \LearningRate^{(\GlobalIter)} L)-\frac{1}{\LearningRate^{(\GlobalIter)} L} \MomentumCoeff^{(\GlobalIter)^2} \!\!(1+5\LearningRate^{(\GlobalIter)} L \! + \! 4 \LearningRate^{(\GlobalIter)^2} L^2) + \frac{1}{\LearningRate^{(\GlobalIter)} L},
    $$
    and
    $
        C \defeq  1+\LearningRate^{(\GlobalIter)} L + 2 z_1 (1+ \LearningRate^{(\GlobalIter)} L) \MomentumCoeff^{(\GlobalIter)^2}.
    $
    We denote $\hat{t}=t+a_1\frac{L}{\mu}$. Thus, we have
    \begin{align}
    \label{eq:Wdiff}
        \nonumber & V^{(\GlobalIter+1)} \!\!-\!V^{(\GlobalIter)} \!\!= (\hat{t}+\!1)^2(W^{(\GlobalIter+1)} \!\!-\! W^{(\GlobalIter)}) \!+\! \left(\!2\hat{t}\!+\!1\!\right)W^{(\GlobalIter)}, \text{and} \\
        & W^{(\GlobalIter+1)} \!-\! W^{(\GlobalIter)} \\ \nonumber
        \stackrel{(a)}{\leq} & -\LearningRate^{(\GlobalIter)} A \cdot \mathbb{E} \left[  \| \nabla\Loss_\HonestSet(\model^{(\GlobalIter)}) \|^2\right] - z_1 \LearningRate^{(\GlobalIter)} B \cdot \mathbb{E}\left[\! \| \delta^{(\GlobalIter)} \|^2 \!\right] \\ \nonumber
        + & \LearningRate^{(\GlobalIter)} C \cdot \mathbb{E}\left[ \| \mathcal{E}^{(\GlobalIter)} \|^2 \right] + \oringinD \RobAvgCoeff \frac{z_2}{L} (1-\MomentumCoeff^{(\GlobalIter)}) G_{\text{cov}}^2 \\ \nonumber
        - & \oringinD \RobAvgCoeff \frac{z_2}{L} (1-\MomentumCoeff^{(\GlobalIter)}) \mathbb{E}[\Delta^{(\GlobalIter)}] + \frac{z_1}{L}(1-\MomentumCoeff^{(\GlobalIter)})^2\frac{\bar{\sigma}_{DP}^2}{\Totalclient-\Byzantine} \\ \nonumber
        + & 2 \oringinD \RobAvgCoeff \frac{z_2}{L} (1-\MomentumCoeff^{(\GlobalIter)})^2 \left( \bar{\sigma}_{b}^2 + 36\DPsigma^2(1+\frac{\oringinD}{\Totalclient-\Byzantine})  \right) \\ \nonumber
        \stackrel{(b)}{\leq} & \!-\!\LearningRate (\!A\! - \!2C\RobAvgError\!)\! \cdot \! \mathbb{E}\!\! \left[ \! \| \!\nabla\!\Loss_\HonestSet\!(\model^{(\!\GlobalIter)}\!)\! \|^2 \! \right] \!\!-\!\!z_1 \!\LearningRate (\!B\! -  \!\frac{2C \RobAvgError}{z_1} \! ) \!\!\cdot \! \mathbb{E} \!\left[\! \| \delta^{(\!\GlobalIter)} \! \|^2 \!\right] \\ \nonumber
        & \!\!\!- \!\oringinD \RobAvgCoeff z_2 \LearningRate \!\left(\!\! \frac{1-\MomentumCoeff}{\LearningRate L} \!-\! \frac{C}{z_2}\! \right)\! \!\cdot \!\mathbb{E}\!\! \left[ \!\Delta^{\!(\GlobalIter)}\!\right] \!\!+\! \frac{\!(1-\MomentumCoeff)^2\!\!}{L} \bar{\sigma}^2 \!\!\!+\! \oringinD \RobAvgCoeff \frac{z_2}{L} \!(1\!-\!\MomentumCoeff) G_{\text{cov}}^2 \\ \nonumber
        \stackrel{(c)}{\leq} & -2\LearningRate^{(\GlobalIter)} \mu \mathbb{E} [ (A-2C\RobAvgError) \cdot (\Loss_\HonestSet(\model^{(\GlobalIter)})-\Loss^{\star}) \\ \nonumber
         \;\; +& \frac{1}{2} (B\hspace{-1mm}-\frac{2}{z_1}C\RobAvgError) \hspace{-1mm}\cdot \frac{z_1}{L} \| \delta^{(\GlobalIter)} \|^2 
         \;\;\hspace{-3mm}+ \frac{1}{2}(\frac{1-\MomentumCoeff^{(\GlobalIter)}}{\LearningRate^{(\GlobalIter)}L}\hspace{-1mm}-\frac{1}{z_2}C) \frac{\oringinD \RobAvgCoeff z_2}{L} \Delta^{(\GlobalIter)} ] \\ \nonumber
         + &\frac{1}{L}(1-\MomentumCoeff^{(\GlobalIter)})^2 \bar{\sigma}^2 + \oringinD \RobAvgCoeff \frac{z_2}{L} (1-\MomentumCoeff^{(\GlobalIter)}) G_{\text{cov}}^2,
    \end{align}
    where $(a)$ is from~\cite{allouah2023privacy}, $(b)$ uses \cref{eq:error} to substitute $\mathbb{E}[\| \mathcal{E}^{(\GlobalIter)} \|^2]$, rearranges the terms, and uses $\bar{\sigma}^2 \geq z_1\frac{\bar{\sigma}_{DP}^2}{\Totalclient-\Byzantine} + 2 \oringinD \RobAvgCoeff z_2 \left( \bar{\sigma}_{b}^2 + 36\DPsigma^2(1+\frac{\oringinD}{\Totalclient-\Byzantine})  \right)$, $(c)$ comes from the fact that $\Loss_{\mathcal{H}}$ is $\mu$-strongly convex, i.e., for any $\model \in \mathbb{R}^d$, $\| \nabla \Loss_{\mathcal{H}}(\model) \|^2 \geq 2 \mu (\Loss(\model)-\Loss^{\star})$ and $L \geq \mu$.
    
    We want to make sure that the following constraints hold:
    \begin{enumerate*}[label=(\emph{\alph*})]
        \item $A-2C\RobAvgError>0$;
        \item $\frac{1/2(B-\frac{2}{z_2}C\RobAvgError)}{A-2C\RobAvgError}\geq 1$; and
        \item $\frac{1/2(\frac{1-\MomentumCoeff^{(\GlobalIter)}}{\LearningRate^{(\GlobalIter)} L} - \frac{1}{z_2}C)}{A-2C\RobAvgError}\geq 1$.
    \end{enumerate*} We require the first term
    \begin{align*}
        A-2C\RobAvgError = & \frac{1}{2} \left( 1-4\LearningRate^{(\GlobalIter)} L - 8z_1(1+\LearningRate^{(\GlobalIter)}L) \MomentumCoeff^{(\GlobalIter)^2}  \right) \\
        & - 2 \RobAvgError (1+\LearningRate^{(\GlobalIter)} L + 2 z_1 (1+ \LearningRate^{(\GlobalIter)} L) \MomentumCoeff^{(\GlobalIter)^2}) \\
        = & \frac{5}{2} - 2(1+\RobAvgError)(1+\LearningRate^{(\GlobalIter)} L)(1+2z_1 \MomentumCoeff^{(\GlobalIter)^2})>0,
    \end{align*}
    since $\LearningRate^{(\GlobalIter)} L \leq \frac{1}{24}$, $z_1=\frac{1}{16}$, and $\MomentumCoeff^2 \leq 1$, we obtain 
    \begin{align}
    \label{cons:1}
        \RobAvgError<\frac{1}{15}.
    \end{align}
    For the second term, since $A-2C\RobAvgError>0$ and $z_2=2$, we require $\frac{1}{2}(B-\frac{2}{z_2}C\RobAvgError) - (A-2C\RobAvgError)= -\frac{1}{2} (2A-3C\RobAvgError-B) \geq 0$.
    By substituting $A,B$ and $C$ and rearranging the terms, we require
    \begin{align*}
        & 2A\hspace{-1mm}-3C\RobAvgError\hspace{-1mm}-B \hspace{-1mm}
        = 21\hspace{-1mm}-\hspace{-1mm}(4\hspace{-1mm}+\hspace{-1mm}3\RobAvgError)(1+\LearningRate^{(\GlobalIter)}L)\hspace{-1mm}+\hspace{-1mm} 32\LearningRate^{(\GlobalIter)} L - \frac{1}{\LearningRate^{(\GlobalIter)} L} \\
        & \!- \! \MomentumCoeff^{(\GlobalIter)^2} \!\! \left(\! \frac{1}{8}(4+3\RobAvgError)(1+\LearningRate^{(\GlobalIter)}L) \!-\! \frac{1}{\LearningRate^{(\GlobalIter)} L} \!-\! 5 \!-\! 4\LearningRate^{(\GlobalIter)} L \! \right) \! \leq  0.
    \end{align*}
    By looking into the term $\frac{1}{8}(4+3\RobAvgError)(1+\LearningRate^{(\GlobalIter)}L) + \frac{1}{\LearningRate^{(\GlobalIter)} L} + 5 + 4\LearningRate^{(\GlobalIter)} L$, and due to the fact that $0 \leq \RobAvgError \leq 1$, we realize that the term is always positive when $\LearningRate^{(\GlobalIter)} L \leq \frac{1}{24}$. Thus, the maximum of $2A-3C\RobAvgError-B$ is achieved when $\MomentumCoeff^{(\GlobalIter)}=0$.
    Thus, we instead require 
    \begin{align*}
        21 - (4+3\RobAvgError)(1+\LearningRate^{(\GlobalIter)}L) + 32\LearningRate^{(\GlobalIter)} L - \frac{1}{\LearningRate^{(\GlobalIter)} L} \leq 0.
    \end{align*}
    Since $\LearningRate^{(\GlobalIter)} L >0 $, this implies $(28-3\RobAvgError)(\LearningRate^{(\GlobalIter)}L)^2 + (17-3\RobAvgError)\LearningRate^{(\GlobalIter)} L  - 1 \leq 0$.
    Since $\LearningRate^{(\GlobalIter)} L \leq \frac{1}{24}$, we require $\RobAvgError \geq -\frac{28}{15}$, which always holds.
    
    The third term requires $\frac{1/2(\frac{1-\MomentumCoeff^{(\GlobalIter)}}{\LearningRate^{(\GlobalIter)} L} - \frac{1}{z_2}C)}{A-2C\RobAvgError}\geq 1$. Using the facts that $0 \leq \MomentumCoeff^{(\GlobalIter)} \leq 1$, $1-\MomentumCoeff^{(\GlobalIter)} = 24\LearningRate^{(\GlobalIter)} L$, $\LearningRate^{(\GlobalIter)} \leq \frac{1}{24L}$, and $A-2C\RobAvgError>0$, we obtain 
    \begin{align*}
        & \!\!\!\! \frac{1-\MomentumCoeff^{(\GlobalIter)}}{\LearningRate^{(\GlobalIter)} L} \!-\! \frac{1}{z_2}C \!=\! \frac{1-\MomentumCoeff^{(\GlobalIter)}}{\LearningRate^{(\GlobalIter)} L} \!-\! \frac{1\!+\!\LearningRate^{(\GlobalIter)} \!L \!+\! 2 z_1\! (1 \!+\! \LearningRate^{(\GlobalIter)} L) \MomentumCoeff^{(\GlobalIter)^2}\!\!}{z_2}  \\
        \geq & \frac{1-\MomentumCoeff^{(\GlobalIter)}}{\LearningRate^{(\GlobalIter)} L} - \frac{1}{2}(1+\frac{1}{24}+32(1+\frac{1}{24})) = 24 -18 =6
    \end{align*}
    Thus, we only require $A-2C\RobAvgError \leq \frac{1}{2}\times 6=3$. Since 
    \begin{align*}
    A-2C\RobAvgError = & \frac{5}{2} - 2(1+\RobAvgError)(1+\LearningRate^{(\GlobalIter)} L)(1+2z_1 \MomentumCoeff^{(\GlobalIter)^2}) \\
    \leq & \frac{5}{2} - 2(1+\RobAvgError) \leq \frac{1}{2},
    \end{align*}
    thus, $A-2C\RobAvgError \leq 3$ always holds.
    By combining the above constraints, we require $\RobAvgError<\frac{1}{15}$.
    Let $\RobAvgError<\frac{1}{15}$, we obtain
    \begin{align*}
        & W^{(\GlobalIter+1)} - W^{(\GlobalIter)} 
        \stackrel{(a)}{\leq}  - 2\LearningRate^{(\GlobalIter)} \mu (A-2C\RobAvgError) \mathbb{E} [ \Loss_\HonestSet(\model^{(\GlobalIter)})-\Loss^{\star} \\
        & \, +\! \frac{\frac{1}{2} (B-\frac{2}{z_1}C\RobAvgError)}{A-2C\RobAvgError}\! \frac{z_1}{L}\! \| \delta^{(\GlobalIter)} \|^2\!\! + \!\frac{\frac{1}{2}(\frac{1-\MomentumCoeff^{(\GlobalIter)}}{\LearningRate^{(\GlobalIter)}L}-\frac{1}{z_2}C)}{A-2C\RobAvgError} \frac{\oringinD \RobAvgCoeff z_2}{L} \Delta\!^{(\GlobalIter)} \!] \\
        &+  \frac{1}{L}(1-\MomentumCoeff^{(\GlobalIter)})^2 \bar{\sigma}^2 + \oringinD \RobAvgCoeff \frac{z_2}{L} (1-\MomentumCoeff^{(\GlobalIter)}) G_{\text{cov}}^2 \\
       & \stackrel{(b)}{\leq}  - 2\LearningRate^{(\GlobalIter)} \mu (A-2C\RobAvgError) \mathbb{E} [ \Loss_\HonestSet(\model^{(\GlobalIter)})-\Loss^{\star} + \frac{z_1}{L} \| \delta^{(\GlobalIter)} \|^2 \\
        & + \frac{\oringinD \RobAvgCoeff z_2}{L} \Delta^{(\GlobalIter)} ] + \frac{1}{L}(1-\MomentumCoeff^{(\GlobalIter)})^2 \bar{\sigma}^2 + \oringinD \RobAvgCoeff \frac{z_2}{L} (1-\MomentumCoeff^{(\GlobalIter)}) G_{\text{cov}}^2 \\
        \stackrel{(c)}{=} & \!\! - \! 2\LearningRate^{(\GlobalIter)} \! \mu( \! A \!-\! 2C\RobAvgError \!) W^{(\GlobalIter)}\!\!\! +\! \frac{( \!1 \! - \! \MomentumCoeff^{(\GlobalIter)}\!)^2 \bar{\sigma}^2 \!\!}{L} \! +\!  \frac{2\oringinD \RobAvgCoeff (\! 1\!-\!\MomentumCoeff^{(\GlobalIter)} \!)G_{\text{cov}}^2 }{L} ,
    \end{align*}
    where $(a)$ and $(b)$ are obtained by satisfying the constraints, and $(c)$ is due to the definition of $W^{(\GlobalIter)}$. Therefore, 
    \begin{align*}
        & V^{(\GlobalIter+1)}-V^{(\GlobalIter)} \\
        \leq & (\hat{t}+1)^2(- 2\LearningRate^{(\GlobalIter)} \mu (A-2C\RobAvgError) W^{(\GlobalIter)}\! +\! \frac{1}{L}(1-\MomentumCoeff^{(\GlobalIter)})^2 \bar{\sigma}^2 \!\\
        & + \! \oringinD \RobAvgCoeff \frac{2}{L} (1-\MomentumCoeff^{(\GlobalIter)}) G_{\text{cov}}^2) +\! \left(2\hat{t}+1\right)W^{(\GlobalIter)} \\
        \stackrel{(a)}{=} & \left( - \frac{20(\hat{t}+1)^2 (A-2C\RobAvgError)}{\hat{t}}+(2\hat{t}+1) \right) W^{(\GlobalIter)} \\
        + & L a_1^2 (\frac{\hat{t}+1}{\mu \hat{t}})^2 \bar{\sigma}^2 + 2a_1 \oringinD \kappa \frac{(\hat{t}+1)^2}{\mu \hat{t}} G_{\text{cov}}^2 \\
        \stackrel{(b)}{\leq} & L a_1^2 (\frac{\hat{t}+1}{\mu \hat{t}})^2 \bar{\sigma}^2 + 2a_1 \oringinD \kappa \frac{(\hat{t}+1)^2}{\mu \hat{t}} G_{\text{cov}}^2
    \end{align*}
    where $(a)$ is by rearranging the terms and uses $1-\MomentumCoeff^{(\GlobalIter)}=24L\LearningRate^{(\GlobalIter)}$ and $\LearningRate^{(\GlobalIter)}=\frac{10}{\mu(\GlobalIter+a_1 \frac{L}{\mu})}$, and $(b)$ is from the result that $- \frac{20(\hat{t}+1)^2 (A-2C\RobAvgError)}{\hat{t}}+(2\hat{t}+1) \leq 0$ for all possible $\hat{t}=t+a_1\frac{L}{\mu}\geq t+240 \geq 240$ if $\RobAvgError < \frac{3}{125}$. This is because making sure 
    $- \frac{20(\hat{t}+1)^2 (A-2C\RobAvgError)}{\hat{t}}+(2\hat{t}+1) \leq 0,$
    for $\hat{t} \geq 240$, is equivalent to ensuring
    \begin{align*}
        (2-20(A-2C\RobAvgError))\hat{t}^2 + (1-40(A-2C\RobAvgError)) \hat{t} & \\
        - 20(A-2C\RobAvgError) & \leq 0,
    \end{align*}
    for $\hat{t} \geq 240$. Therefore, we require the coefficient of $\hat{\GlobalIter}^2$ to be negative, i.e., 
    $
        A-2C\RobAvgError > \frac{1}{10},
    $
    and the larger root of the function to be greater than $240$, or the coefficient of $\hat{\GlobalIter}^2$ to be zero, i.e., $A-2C\RobAvgError = \frac{1}{10}$, and the remaining terms result in a non-positive value when $\hat{t} \geq 240$. Taking these into account, we require $A-2C\RobAvgError \geq \frac{1}{10}$, i.e., 
    \begin{align}
    \label{cons:2}
        \RobAvgError \leq \frac{3}{125}.
    \end{align}
    Now we obtain two constraints on $\RobAvgError$, i.e., \cref{cons:1,cons:2}, integrating these gives us $\RobAvgError\leq \frac{3}{125}.$
    
    Then, by following the proving techniques in~\cite{allouah2023privacy}, we are able to show that, when $\RobAvgError \leq \frac{3}{125}$,
    \begin{align*}
        \mathbb{E}[\Loss_\HonestSet(\model^{(\TotalGloablIter)})-\Loss^{\star}] \leq \frac{4a_1 \oringinD \RobAvgCoeff G_\mathrm{cov}^2}{\mu}+\frac{2a_1^2L\bar{\sigma}^2}{\mu^2\TotalGloablIter}+\frac{2a_1^2L^2\Loss_0}{\mu^2\TotalGloablIter^2},
    \end{align*}
    where $\Loss_0 \defeq \Loss_\HonestSet(\model^{(0)})- \Loss^{\star}$.

\subsubsection{Non-convex}
Then, we prove the convergence for the non-convex case.
We set $\LearningRate= \min\{ \frac{1}{24L}, \frac{1}{8\bar{\sigma}} \sqrt{\frac{\Loss_\HonestSet(\model^{(0)})- \Loss^{\star}}{8 L \TotalGloablIter}} \}$, $\MomentumCoeff = 1-24L \LearningRate^{(\GlobalIter)}$, where $a_1=240$. We have $\LearningRate^{(\GlobalIter)} \leq \frac{1}{24L}$.
Suppose $\MomentumCoeff^{(\GlobalIter)}=\MomentumCoeff$ and $\LearningRate^{(\GlobalIter)}=\LearningRate$, from \cref{eq:Wdiff} we have
\begin{align*}
    & W^{(\GlobalIter+1)} - W^{(\GlobalIter)} \\
    \leq & \!-\!\LearningRate (\!A\! - \!2C\RobAvgError\!)\! \cdot \! \mathbb{E}\!\! \left[ \! \| \!\nabla\!\Loss_\HonestSet\!(\model^{(\!\GlobalIter)}\!)\! \|^2 \! \right] \!\!-\!\!z_1 \!\LearningRate (\!B\! -  \!\frac{2C \RobAvgError}{z_1} \! ) \!\!\cdot \! \mathbb{E} \!\left[\! \| \delta^{(\!\GlobalIter)} \! \|^2 \!\right] \\
    & \!\!\!- \!\oringinD \RobAvgCoeff z_2 \LearningRate \!\left(\!\! \frac{1-\MomentumCoeff}{\LearningRate L} \!-\! \frac{C}{z_2}\! \right)\! \!\cdot \!\mathbb{E}\!\! \left[ \!\Delta^{\!(\GlobalIter)}\!\right] \!\!+\! \frac{\!(1-\MomentumCoeff)^2\!\!}{L} \bar{\sigma}^2 \!\!\!+\! \oringinD \RobAvgCoeff \frac{z_2}{L} \!(1\!-\!\MomentumCoeff) G_{\text{cov}}^2.
\end{align*}

From the proof for the strongly-convex setting, we know that $\frac{1}{2}(B-\frac{2}{z_2}C\RobAvgError) \geq (A-2C\RobAvgError)  > 0$ and $1/2(\frac{1-\MomentumCoeff^{(\GlobalIter)}}{\LearningRate^{(\GlobalIter)} L} - \frac{1}{z_2}C) \geq A-2C\RobAvgError > 0$ always hold if $\RobAvgError<\frac{1}{15}$, thus
\begin{align*}
     W^{(\GlobalIter+1)} - W^{(\GlobalIter)}
    \leq & -\LearningRate (A- 2C\RobAvgError) \cdot \mathbb{E} \left[  \| \nabla\Loss_\HonestSet(\model^{(\GlobalIter)}) \|^2\right] \\
        & + \frac{1}{L}(1-\MomentumCoeff)^2 \bar{\sigma}^2 + \oringinD \RobAvgCoeff \frac{z_2}{L} (1-\MomentumCoeff) G_{\text{cov}}^2,
\end{align*}
with $\RobAvgError<\frac{1}{15}$.
By rearranging terms, we have
\begin{align*}
    & \LearningRate (A- 2C\RobAvgError) \cdot \mathbb{E} \left[  \| \nabla\Loss_\HonestSet(\model^{(\GlobalIter)}) \|^2\right]  \\
    \leq & W^{(\GlobalIter)} - W^{(\GlobalIter+1)}
        + \frac{1}{L}(1-\MomentumCoeff)^2 \bar{\sigma}^2 + \oringinD \RobAvgCoeff \frac{z_2}{L} (1-\MomentumCoeff) G_{\text{cov}}^2 .
\end{align*}
Since $\MomentumCoeff=1-24\LearningRate L$, after averaging over $\GlobalIter \in \{ 0, \cdots, \TotalGloablIter\hspace{-1mm}-\hspace{-1mm}1 \}$,
\begin{align*}
    & \frac{1}{\TotalGloablIter} \sum_{\GlobalIter=0}^{\TotalGloablIter-1} \mathbb{E} \left[  \| \nabla\Loss_\HonestSet(\model^{(\GlobalIter)}) \|^2\right]  \\
    \leq & \frac{ W^{(0)} - W^{(\TotalGloablIter)}}{\LearningRate (A- 2C\RobAvgError) \TotalGloablIter} \!+\! \frac{(1-\MomentumCoeff)^2 \bar{\sigma}^2}{L\LearningRate (A- 2C\RobAvgError)} \!+\! \frac{\oringinD \RobAvgCoeff z_2(1-\MomentumCoeff) G_{\text{cov}}^2}{L\LearningRate (A- 2C\RobAvgError)}   \\
    \stackrel{(a)}{=} & \frac{ W^{(0)} - W^{(\TotalGloablIter)} }{\LearningRate (A- 2C\RobAvgError) \TotalGloablIter}  + \frac{24^2 \LearningRate L}{A- 2C\RobAvgError} \bar{\sigma}^2 + \frac{24 \oringinD \RobAvgCoeff z_2}{A- 2C\RobAvgError}  G_{\text{cov}}^2,
\end{align*}
where $(a)$ is by plugging in $1- \MomentumCoeff=24\LearningRate L$. By following the argument in~\cite{allouah2023privacy} and initializing $\Momentum_i^{(0)}=0$, we know $\Delta^{(0)}=0$, $\| \delta^{(0)} \|^2=\| \nabla \Loss_{\mathcal{H}} (\model^{(0)}) \|^2$, and $\| \nabla \Loss_{\mathcal{H}} (\model^{(0)}) \|^2 \leq 2L(\Loss_{\mathcal{H}} (\model^{(0)}) - \Loss^{\star})$ since $\Loss_{\mathcal{H}}$ is $L$-smooth. Thus, by plugging in $z_1=\frac{1}{16}$, we obtain
\begin{align*}
   & W^{(0)}\hspace{-1mm}- \hspace{-1mm}W^{(\TotalGloablIter)} \hspace{-1mm}\leq W^{(0)} \hspace{-1mm}
    \leq \Loss_\HonestSet(\model^{(0)})\hspace{-1mm}-\hspace{-1mm}\Loss^{\star} \!+\! \frac{z_1}{L} \| \delta^{(0)} \|^2 \!+\! \oringinD\RobAvgCoeff \cdot\hspace{-1mm} \frac{z_2}{L}\Delta^{(0)} \\
    & = \Loss_\HonestSet(\model^{(0)})\hspace{-1mm}- \hspace{-1mm}\Loss^{\star} \!+\! \frac{z_1}{L} \| \nabla \Loss_{\mathcal{H}} (\model^{(0)}) \|^2 
    \leq \frac{9}{8} \left( \Loss_\HonestSet(\model^{(0)})- \Loss^{\star} \right).
\end{align*}
Let $\Loss_0 \defeq \Loss_\HonestSet(\model^{(0)})- \Loss^{\star}$. By plugging this bound, we obtain
\begin{align*}
    & \frac{1}{\TotalGloablIter} \sum_{\GlobalIter=0}^{\TotalGloablIter-1} \mathbb{E} \left[  \| \nabla\Loss_\HonestSet(\model^{(\GlobalIter)}) \|^2\right]  \\
    \leq & \frac{9\Loss_0}{8T\LearningRate (A- 2C\RobAvgError)} + \frac{24^2 \LearningRate L}{A- 2C\RobAvgError} \bar{\sigma}^2 + \frac{\frac{3}{2} \oringinD \RobAvgCoeff }{A- 2C\RobAvgError}  G_{\text{cov}}^2.
\end{align*}
We notice that only the first two terms depend on $\LearningRate$, and the optimal balance is obtained by minimizing $\frac{9 \Loss_0}{8\TotalGloablIter\LearningRate (A- 2C\RobAvgError)} + \frac{24^2 \LearningRate L}{A- 2C\RobAvgError} \bar{\sigma}^2$. Thus, we choose $\LearningRate=\sqrt{\frac{\Loss_0}{8^3 LT \bar{\sigma}^2}}$.

 Since we set $\LearningRate= \min\{ \frac{1}{24L}, \frac{1}{8\bar{\sigma}} \sqrt{\frac{\Loss_0}{8 L \TotalGloablIter}} \}$, thus, we obtain
\begin{align*}
    & \frac{1}{\TotalGloablIter} \sum_{\GlobalIter=0}^{\TotalGloablIter-1} \mathbb{E} \left[  \| \nabla\Loss_\HonestSet(\model^{(\GlobalIter)}) \|^2\right]  \\
        \stackrel{(a)}{\leq} & \frac{1}{A- 2C\RobAvgError} ( \frac{27L \Loss_0}{\TotalGloablIter} + \frac{36 \bar{\sigma} \sqrt{2L \Loss_0}}{\sqrt{\TotalGloablIter}} + \frac{3}{2} \oringinD \RobAvgCoeff G_{\text{cov}}^2 ),
\end{align*}
where $(a)$ uses the facts that $\LearningRate \leq \frac{1}{8\bar{\sigma}} \sqrt{\frac{\Loss_0}{8 L \TotalGloablIter}}$, and $\frac{1}{\LearningRate} = \max \{ 24L,  8\bar{\sigma} \frac{8L \TotalGloablIter}{\Loss_0}\} \leq 24L + 8\bar{\sigma} \frac{8L \TotalGloablIter}{\Loss_0}$, and $0< A- 2C\RobAvgError \leq \frac{1}{2}$.

Since $\tilde{\model}$ is randomly chosen from $\{ \model^{0},\cdots, \model^{(\GlobalIter)} \}$, we have $\mathbb{E} \left[ \nabla \Loss_{\mathcal{H}}(\tilde{\model}) \right]=\frac{1}{\TotalGloablIter} \sum_{\GlobalIter=0}^{\TotalGloablIter-1} \mathbb{E} \left[  \| \nabla\Loss_\HonestSet(\model^{(\GlobalIter)}) \|^2\right]$. Hence, if $0<\RobAvgError<\frac{1}{15}$, we obtain the above bound.

\subsection{Proof of \cref{the:scheme}}
\label{app:proof}
First, we present some important preliminaries.
\begin{proposition}[Composition of RDP~\cite{mironov2017renyi}]
    Let $f$ be an $(\RDPalpha,\RDPepsilon)$-RDP mechanism that takes a dataset as input, and $f'$ be an $(\RDPalpha,\RDPepsilon')$-RDP mechanism that takes both the dataset and the output of $f$ as input.  Then, their composition satisfies $(\RDPalpha,\RDPepsilon+\RDPepsilon')$-RDP.
\end{proposition}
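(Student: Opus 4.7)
The plan is to prove the composition bound directly from the definition of Rényi divergence applied to the joint output distribution of $f$ and $f'$. Fix two adjacent datasets $\Dataset, \Dataset'$ and view the composed mechanism as the map $g(\Dataset) \defeq (Y, Z)$, where $Y \sim f(\Dataset)$ and, conditionally on $Y = y$, $Z \sim f'(\Dataset, y)$. I would denote the marginals of $Y$ by $p(y), q(y)$ under $\Dataset, \Dataset'$ respectively, and the conditional laws of $Z$ by $p_y(z), q_y(z)$. The joint densities then factor as $P(y,z) = p(y) p_y(z)$ and $Q(y,z) = q(y) q_y(z)$, and the goal is to show $D_\RDPalpha(P \Vert Q) \leq \RDPepsilon + \RDPepsilon'$.

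I would then plug these densities into the defining expectation of the Rényi divergence and separate the sum/integral over $z$ from the sum/integral over $y$:
\begin{align*}
\mathbb{E}_{(y,z)\sim Q}\!\left[\left(\tfrac{P(y,z)}{Q(y,z)}\right)^{\!\RDPalpha}\right]
&= \int q(y)\!\left(\tfrac{p(y)}{q(y)}\right)^{\!\RDPalpha} \!\int q_y(z)\!\left(\tfrac{p_y(z)}{q_y(z)}\right)^{\!\RDPalpha}\! dz\, dy.
\end{align*}
For each fixed $y$, the inner integral equals $\exp\bigl((\RDPalpha-1)\, D_\RDPalpha(p_y \Vert q_y)\bigr)$. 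Since $f'$ is $(\RDPalpha,\RDPepsilon')$-RDP on adjacent inputs $\Dataset,\Dataset'$ for every fixed auxiliary input $y$, the inner factor is bounded by $\exp\bigl((\RDPalpha-1)\RDPepsilon'\bigr)$ uniformly in $y$, so I can pull it outside the outer integral.

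What remains is $\exp\bigl((\RDPalpha-1)\RDPepsilon'\bigr) \cdot \int q(y) \bigl(p(y)/q(y)\bigr)^{\RDPalpha} dy$, whose second factor is exactly $\exp\bigl((\RDPalpha-1) D_\RDPalpha(f(\Dataset)\Vert f(\Dataset'))\bigr) \leq \exp\bigl((\RDPalpha-1)\RDPepsilon\bigr)$ by the RDP guarantee on $f$. Taking logarithms and dividing by $\RDPalpha-1$ yields the desired inequality $D_\RDPalpha(g(\Dataset)\Vert g(\Dataset')) \leq \RDPepsilon + \RDPepsilon'$, which is the statement to prove.

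The only subtlety — and the step that deserves a careful sentence in the write-up — is the ``uniformly in $y$'' claim: the hypothesis that $f'$ is $(\RDPalpha,\RDPepsilon')$-RDP when it takes both the dataset and the output of $f$ as input means precisely that the worst-case Rényi divergence bound holds for every admissible value of the auxiliary argument, so the bound can be factored out before integrating over $y$. I would flag this explicitly and, for mathematical tidiness, state the argument in the general measure-theoretic form so that it covers both discrete and continuous outputs without rewriting.
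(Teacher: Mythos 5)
The paper states this proposition as an imported preliminary and cites Mironov~(2017) without reproducing a proof, so there is no in-paper argument to compare against. Your proof is correct and is essentially the standard one from the cited reference: factor the joint density of the composed outputs, bound the inner Rényi integral over $z$ uniformly in $y$ using the RDP guarantee of $f'$ for each fixed auxiliary input, and then bound the remaining integral over $y$ using the RDP guarantee of $f$. Your flagged subtlety --- that the hypothesis on $f'$ must be read as a worst-case bound over every admissible value of the auxiliary argument, which is what licenses pulling $\exp\bigl((\RDPalpha-1)\RDPepsilon'\bigr)$ outside the outer integral --- is exactly the right point to make explicit, and the measure-theoretic phrasing covering both discrete and continuous outputs is appropriate.
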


\begin{proposition}[$l_2$-sensitivity~\cite{dwork2014algorithmic}]
    For a given randomized mechanism $f$, the $l_2$-sensitivity of $f$ is defined as $\Delta_2\defeq\max \| f(\Dataset)-f(\Dataset') \|$, where $\Dataset$ and $\Dataset'$ are adjacent datasets.
\end{proposition}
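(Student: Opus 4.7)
The final statement is a definition rather than a theorem: the symbol $\Delta_2$ is introduced as $\max \|f(\Dataset)-f(\Dataset')\|$ taken over all pairs of neighboring datasets. As such it requires no proof; the right-hand side is simply the quantity being named ``$l_2$-sensitivity'', and no nontrivial claim is being asserted that would need verification. Any ``proof'' would be vacuous.

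That said, the real purpose this definition serves in the context of \ourscheme is to set up the sensitivity computation for the specific mechanism used inside the DP step, namely the per-user clipped mean gradient
\[
\phi_i(\Dataset_i) \defeq \frac{1}{\MinibatchSize}\sum_{(\bm{x}_j,y_j)\in\Minibatch_i}\text{Clip}\bigl(\nabla\SampleLoss(\model;\bm{x}_j,y_j);\ClipNorm\bigr).
\]
The plan, once this definition is in place, is to bound $\Delta_2(\phi_i)$ by the standard argument: every clipped per-sample gradient has $l_2$-norm at most $\ClipNorm$, and two neighboring $\Minibatch_i,\Minibatch_i'$ differ in at most one record, so by the triangle inequality only a single summand in the mean is altered, giving $\Delta_2(\phi_i)\leq 2\ClipNorm/\MinibatchSize$.

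This sensitivity value is then the input to the Gaussian-mechanism RDP bound $\RDPepsilon=\RDPalpha\Delta_2^2/(2\DPsigma^2)$, which combines with subsampling amplification and the preceding RDP composition proposition across the $\TotalGloablIter$ global iterations, and finally with the RDP-to-DP conversion, to deliver the $(\DPepsilon,\DPdelta)$ guarantee stated in \cref{the:scheme}. Since no claim is being made in the proposition itself, there is no genuine obstacle at this step; the only nontrivial accounting is deferred to the downstream composition and subsampling analysis.
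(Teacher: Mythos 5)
You are right that this ``proposition'' is merely the definition of $l_2$-sensitivity and carries no claim to prove; the paper likewise states it without proof as a preliminary. Your anticipated downstream use --- the bound $\Delta_2 \leq 2\ClipNorm/\MinibatchSize$ for the clipped mean gradient feeding into the Gaussian-mechanism RDP bound, subsampling amplification, and composition --- matches exactly how the paper deploys it in the proof of \cref{the:scheme}.
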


\begin{proposition} [RDP for Gaussian Mechanisms~\cite{mironov2017renyi}]
\label{pro:rdp_gaussian}
    Let $f$ be a mechanism with $l_2$-sensitivity at most $\Delta_2$. Then the Gaussian mechanism $f+\mathcal{N}(0,\DPsigma^2I_d)$ satisfies $(\RDPalpha,\frac{\Delta_2^2}{2\DPsigma^2}\RDPalpha)$-RDP.
\end{proposition}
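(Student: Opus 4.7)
The plan is to reduce the claim to a direct computation of the Rényi divergence between two spherical Gaussians of identical covariance and then invoke the $\ell_2$-sensitivity bound. I would first fix an arbitrary pair of adjacent datasets $\Dataset, \Dataset'$ and denote $\mu_1 \defeq g(\Dataset)$, $\mu_2 \defeq g(\Dataset')$. The outputs of the Gaussian mechanism are then distributed as $P \defeq \mathcal{N}(\mu_1, \DPsigma^2 I_d)$ and $Q \defeq \mathcal{N}(\mu_2, \DPsigma^2 I_d)$, and the $\ell_2$-sensitivity assumption gives $\|\mu_1 - \mu_2\| \leq \Delta_2$. It therefore suffices to show, for any two means at distance $\Delta$, that $D_{\RDPalpha}(\mathcal{N}(\mu_1,\DPsigma^2 I_d) \,\|\, \mathcal{N}(\mu_2,\DPsigma^2 I_d)) = \RDPalpha \Delta^2/(2\DPsigma^2)$.

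Next, I would substitute the Gaussian densities into the definition of $D_{\RDPalpha}$. The likelihood ratio simplifies to $\exp\bigl(\tfrac{1}{2\DPsigma^2}(\|z-\mu_2\|^2 - \|z-\mu_1\|^2)\bigr)$; raising to the $\RDPalpha$-th power and multiplying by $Q(z)$ produces, inside the exponent, the quadratic form $-\RDPalpha\|z-\mu_1\|^2 + (\RDPalpha-1)\|z-\mu_2\|^2$ divided by $2\DPsigma^2$. Completing the square in $z$ introduces the shift $\nu \defeq \RDPalpha\mu_1 - (\RDPalpha-1)\mu_2$ and leaves a residual term independent of $z$; the coefficient of $\|z\|^2$ after combining is exactly $-1$, so the integral over $z$ reduces to a standard Gaussian normalization and evaluates to one.

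Finally, the residual scalar must be simplified. Expanding $\|\nu\|^2 = \RDPalpha^2\|\mu_1\|^2 - 2\RDPalpha(\RDPalpha-1)\langle\mu_1,\mu_2\rangle + (\RDPalpha-1)^2\|\mu_2\|^2$ and combining it with the remaining $-\RDPalpha\|\mu_1\|^2 + (\RDPalpha-1)\|\mu_2\|^2$ collapses everything to $\RDPalpha(\RDPalpha-1)\|\mu_1-\mu_2\|^2$. Dividing by $\RDPalpha-1$, as required by the Rényi-divergence definition, yields $D_{\RDPalpha}(P\|Q) = \RDPalpha\|\mu_1-\mu_2\|^2/(2\DPsigma^2)$, and the sensitivity bound $\|\mu_1-\mu_2\|\leq\Delta_2$ gives the claimed $(\RDPalpha, \Delta_2^2\RDPalpha/(2\DPsigma^2))$-RDP guarantee.

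The main obstacle is the bookkeeping in the square-completion step. One must ensure the $\|z\|^2$ coefficients cancel precisely to $-1$—which is what allows the $z$-integral to close against the existing Gaussian normalization—and then verify that the leftover quadratic form in $(\mu_1,\mu_2)$ simplifies exactly to $\RDPalpha(\RDPalpha-1)\|\mu_1-\mu_2\|^2$ rather than to an unwieldy expression. The factor $\RDPalpha(\RDPalpha-1)$ is the critical checkpoint: it is precisely what, upon division by $\RDPalpha-1$ in the definition of $D_{\RDPalpha}$, produces the linear dependence on $\RDPalpha$ asserted by the proposition. Everything else is routine Gaussian algebra.
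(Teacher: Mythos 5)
Your proof is correct. Note that the paper does not prove this proposition at all --- it is imported verbatim from Mironov's RDP paper \cite{mironov2017renyi} as a known preliminary --- so there is no in-paper argument to compare against; your derivation (completing the square to show $D_{\RDPalpha}\left(\mathcal{N}(\mu_1,\DPsigma^2 I_d)\,\Vert\,\mathcal{N}(\mu_2,\DPsigma^2 I_d)\right)=\RDPalpha\|\mu_1-\mu_2\|^2/(2\DPsigma^2)$ and then invoking $\|\mu_1-\mu_2\|\le\Delta_2$ together with monotonicity of this expression in the distance) is exactly the standard one from that reference, and the key checkpoints you flag (the $\|z\|^2$ coefficient collapsing to $-1$ and the residual simplifying to $\RDPalpha(\RDPalpha-1)\|\mu_1-\mu_2\|^2$) both check out.
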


Subsampling is a privacy amplification method and is a popular mechanism in machine learning, e.g., SGD. Since we perform subsampling before computing gradients and adding noise, we must analyze RDP under subsampled mechanisms.  
Different subsampling methods give different privacy amplification guarantees~\cite{zhu2019poission, wang2019subsampled}. 
For the clarity in the differential privacy analysis, we use subsampling without replacement as our sampling method. Other sampling methods, such as subsampling with replacement or Poisson subsampling, can also be used, but they require separate analyses. We use the following proposition.

Post-processing is another important property guaranteed by differential privacy, including RDP. Let $f$ be a mechanism that satisfies $(\RDPalpha, \RDPepsilon)$-RDP, and let $g$ be a randomized mapping. Then RDP is preserved under post-processing~\cite{mironov2017renyi},
i.e., $g(f(\cdot))$ also satisfies $(\RDPalpha,\RDPepsilon)$-RDP.

Now we are ready to prove \cref{the:scheme}


\noindent\textbf{Privacy.}
In each iteration,  we apply Gaussian noise to the gradients, which are directly computed from the corresponding datasets. All operations- including  momentum computation, compression, robust aggregation, decompression, and model update- are considered post-processing of the perturbed gradients. Thanks to the post-processing and the composition property of differential privacy,
it suffices to compute the DP guarantee up to the noise addition step for a single iteration, 
and then apply composition to obtain the overal DP guarantee for the full algorithm.

Consider two adjacent datasets $\Dataset$ and $\Dataset'$. Two minibatches $\Minibatch_i^{(\GlobalIter)} \subseteq \Dataset$ and $\Minibatch_i'^{(\GlobalIter)} \subseteq \Dataset'$ are sampled without replacement, such that $\Minibatch_i^{(\GlobalIter)}$ and $\Minibatch_i'^{(\GlobalIter)}$ differ by at most one element. Let
$\gradient_i^{(\GlobalIter)}$ and $\gradient_i'^{(\GlobalIter)}$, 
denote the averaged clipped gradients computed from the respective minibatches.   
From the proof for Theorem C$.1$ in~\cite{allouah2023privacy}, we know that 
   $ \| \gradient_i^{(\GlobalIter)} - \gradient_i'^{(\GlobalIter)} \| \leq \frac{2\ClipNorm}{\MinibatchSize} = \Delta_2,$
where $\gradient_i^{(\GlobalIter)}= \frac{1}{\MinibatchSize} \sum_{x \in \Minibatch_i^{(t)}} \text{Clip}(\nabla \SampleLoss(\model^{(\GlobalIter)}; \bm{x}, y), \ClipNorm)$ and $\gradient_i'^{(\GlobalIter)}= \frac{1}{\MinibatchSize} \sum_{x \in \Minibatch_i'^{(t)}} \text{Clip}(\nabla \SampleLoss(\model^{(\GlobalIter)}; \bm{x}, y), \ClipNorm)$.
Therefore, by \cref{pro:rdp_gaussian}, a single iteration of \ourscheme is $(\RDPalpha,(\frac{2\ClipNorm}{\MinibatchSize})^2\frac{\RDPalpha}{2\DPsigma^2})$-RDP. 
Following the ideas from Theorem C$.1$ and $4.1$ in~\cite{allouah2023privacy}, we can prove that, there exists a constant $\chi>0$ such that, for sufficiently small batch size $\MinibatchSize$ ( i.e., $\frac{\MinibatchSize}{\DatasetSize}$ is sufficiently small), when $\DPsigma \geq \chi \frac{2\ClipNorm}{\MinibatchSize} \max \left\{ 1,\frac{\MinibatchSize\sqrt{\TotalGloablIter \log(1/\DPdelta)}}{\DatasetSize \DPepsilon} \right\}$, \ourscheme satisfies  $(\DPepsilon,\DPdelta)$-DP.

\noindent\textbf{Convergence.} 
From the proof of \cref{pro:JLwithRobAvg_deterministic} and \cref{pro:JLwithRobAvg}, we know that, given $\AGGCall$ being $(\Byzantine,\RobAvgCoeff)$-robust averaging, w.p. $1-\frac{1}{|\mathcal{V}|}$, the composition $\compressCall \circ \AGGCall \circ \decompressCall$ provides a $(\Byzantine,\RobAvgCoeff',\RobAvgError)$-robust averaging, with $\RobAvgCoeff'=(1+\JLepsilon)^2 \RobAvgCoeff$ and $\RobAvgError=\JLepsilon^2$. By plugging this result in \cref{pro:composition}, we obtain the conditions on the additional error term $\RobAvgError$ for the strongly convex case and the non-convex case.

\noindent\textbf{Communication.} Due to the use of the Johnson-Lindenstrauss transform as the compression method $\compressCall$, with the compression rate $\frac{\oringinD}{\compressD}$, the noisy momentum vectors  $\Momentum_i \in \mathbb{R}^{\oringinD}$, for $i \in [\Totalclient]$, are compressed to $\compMomentum_i=\JLmatrix\Momentum_i$, where $\compMomentum_i \in \mathbb{R}^{\compressD}$, before being sent to the federator for robust aggregation. The federator then returns a compressed aggregated vector, also in $\mathbb{R}^{\compressD}$, to the clients. Hence, the communication cost is reduced proportionally to the compression rate in both the uplink and downlink directions.

\end{document}